\DeclareSymbolFont{extraup}{U}{zavm}{m}{n}
\DeclareMathSymbol{\varheart}{\mathalpha}{extraup}{86}
\DeclareMathSymbol{\vardiamond}{\mathalpha}{extraup}{87}
\DeclarePairedDelimiterX{\infdivx}[2]{ }{ }{%
  #1\;\delimsize\|\;#2%
}
\newcommand{\R}{\mathbb{R}}
\DeclareMathOperator*{\argmin}{arg\,min}
\algnewcommand\algorithmicforeach{\textbf{for each}}
\newcommand\AP{A\!P}
\newtheorem{problem}{Problem}
\newtheorem{lemma}{Lemma}
\begin{document}

\title{\LARGE \bf Implicit Coordination using Active Epistemic Inference\\
for Multi-Robot Systems}
\author{Lauren Bramblett, Jonathan Reasoner, and Nicola Bezzo
\thanks{Lauren Bramblett, Jonathan Reasoner, and Nicola Bezzo are with the Departments of Systems and Information Engineering and Electrical and Computer Engineering, University of Virginia, Charlottesville, VA 22904, USA. 
Email: {\tt \{qbr5kx, vqh7rx, nb6be\}@virginia.edu}}
}

\maketitle

\begin{abstract}
A Multi-robot system (MRS) provides significant advantages for intricate tasks such as environmental monitoring, underwater inspections, and space missions. However, addressing potential communication failures or the lack of communication infrastructure in these fields remains a challenge. A significant portion of MRS research presumes that the system can maintain communication with proximity constraints, but this approach does not solve situations where communication is either non-existent, unreliable, or poses a security risk. Some approaches tackle this issue using predictions about other robots while not communicating, but these methods generally only permit agents to utilize first-order reasoning, which involves reasoning based purely on their own observations. In contrast, to deal with this problem, our proposed framework utilizes Theory of Mind (ToM), employing higher-order reasoning by shifting a robot's perspective to reason about a belief of others observations. Our approach has two main phases: i) an efficient runtime plan adaptation using active inference to signal intentions and reason about a robot's own belief and the beliefs of others in the system, and ii) a hierarchical epistemic planning framework to iteratively reason about the current MRS mission state. The proposed framework outperforms greedy and first-order reasoning approaches and is validated using simulations and experiments with heterogeneous robotic systems. 

\vspace{3pt}
\noindent\emph{Note---}Videos are provided in the supplementary material and also at {\url{https://www.bezzorobotics.com/lb-smcs24}}. Code will be available on the same website upon publication.
\end{abstract}
\section{Introduction}\label{sec:intro}
 Multi-robot systems (MRS) have the potential to transform current robotics applications, performing tasks more effectively and efficiently than a single robot. Central to MRS research is the idea that robots work together to accomplish common goals. The need for cooperative teaming is evident in numerous applications, such as search and rescue missions, firefighting, and underwater exploration. Effective collaboration requires robots to communicate and synchronize their actions, but difficulties often occur when communication is restricted, disrupted, or presents a security concern. Especially for a heterogeneous MRS where different robots have different operating or sensing capabilities. Humans have an inherent ability to ``see things from another's perspective" by understanding and sharing the beliefs of others without communicating. Imagine a parent attempting to convey a message to a child solely through their movements. The parent might understand the child's short attention span and limited observational skills, thus making exaggerated movements to communicate their intentions. In standard multi-robot missions, there are usually no strategies in place for communication breakdowns, or the strategies that do exist rely solely on each robot's first-order understanding of the environment and system. Planning actions socially requires a robot to infer the \textit{intentions} and \textit{ beliefs} of other agents, \textit{empathizing} to predict what other agents \textit{want} and \textit{know} about each other. The capacity to reason about the perspective of another agent is the foundation of theory of mind (ToM) which enables the {\em ``I know that you know that I know"} paradigm without the need for explicit communication among actors. 
 Formally, as described in \cite{valle2015theory}, there are different orders of reasoning possible: {\em zero}-order is a belief about oneself, {\em first}-order is a belief about others, {\em second}-order is a belief about what others believe about oneself, and {\em third}-order is a belief about what others believe about each other.
By using a second and higher order reasoning architecture, we can increase the operational effectiveness of multi-robot systems during disconnected operations, allowing robots to reason about the capability of other robots and plan according to local observations and distributed beliefs. Epistemic planning is an approach that integrates each agent's belief into a planning paradigm, enabling them to plan based not only on their perceptions but also on what they believe other agents know or intend to do~\cite{bolander2011epistemic}.
 
 \begin{figure}[t!]
\centering
\includegraphics[width=0.47\textwidth]{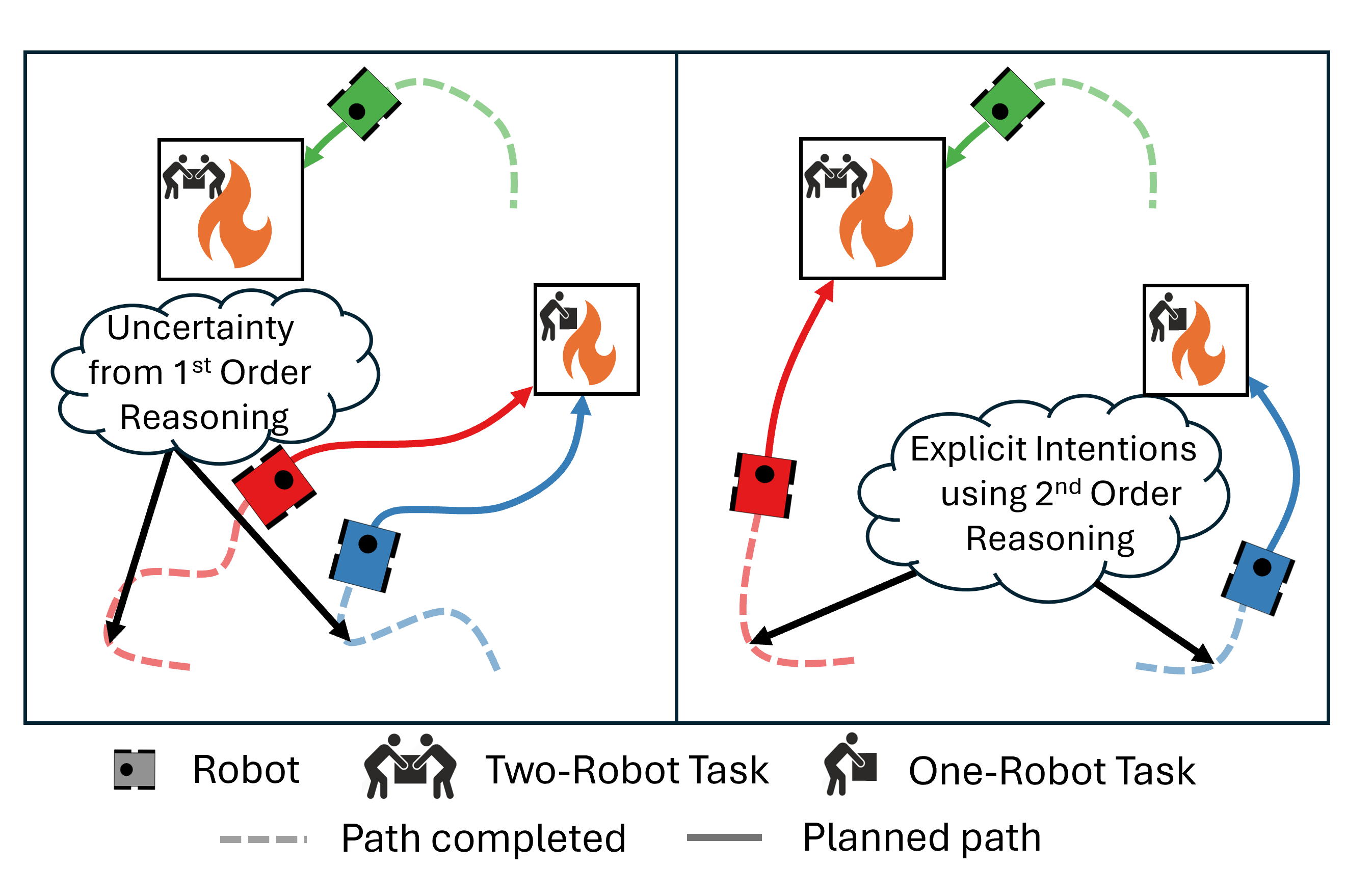}
\vspace{-10pt}
\caption{Pictorial representation of the problem presented in the paper. The robots are unable to explicitly communicate their beliefs and must convey their intentions through sensorimotor communication. In the left frame, the red and blue robot are unable to converge to a correct belief state using only first-order reasoning. In the right frame, the red and blue robot clearly display their intentions by using higher-order reasoning about their observations.}
\label{fig:intro_pic_aif}
\vspace{-20pt}
\end{figure}

 Theory of mind and epistemic planning can enable robots to reason about the probable knowledge and intentions of others. Our previous work shows that in environments with limited communication and uncertain operating conditions, epistemic planning can enable an MRS to achieve exploration and task allocation missions~\cite{bramblett2023epi,bramblett2023frontiers}. In conjunction, active inference can be used to compute belief-action pairs. Active inference operates on the principle that all entities strive to minimize variational free energy. This results in straightforward update rules for actions, perceptions, policy choices, learning processes, and the representation of uncertainty, which can extend to multi-agent processes \cite{maisto2023interactive}.  
 
 In this work, we focus on the following question: \textit{How can we ensure cooperative and efficient behavior for multi-robot tasks when robots cannot explicitly communicate?} Our proposed solution has two main components: 1) an efficient online planning mechanism that leverages active inference to signal to others their own knowledge and intentions based on the current epistemic state and probable goals to accomplish the mission, and 2) heirarchical epistemic planning that leverages our recent research~\cite{bramblett2024robust} which allows the robots to reason about the system goals and adapt its belief about the system at runtime.


Consider the example in Fig.~\ref{fig:intro_pic_aif}, where three robots cannot communicate and the mission is defined by two tasks, one requiring two robots and the other requiring only one robot. During the operation, each robot maintains a belief about the system defined by the likelihood that any robot is moving toward one of the two tasks in the environment. In the left frame, each robot uses only its first-order observations, reasoning about each robot based only on its own observations. In the right frame, each robot uses higher-order reasoning to not only infer other robots' goals based on its own observations but also by empathizing with how other robots' beliefs would change based on its own actions and their subsequent observations. We note that by using only first-order reasoning, the red and blue robots cannot determine the goals of the other, and they converge to an incorrect belief. Using higher-order reasoning, the red and blue robots both account for the observation of the other in deciding their next actions and clearly indicate their intent and beliefs. In this way, the MRS is able to converge to a correct belief state using only local observations. In this work, we utilize up to the third level of reasoning to allow robots to abstract the perceptions of other agents and more accurately derive the beliefs of the system. 

The contribution of this work is two-fold: i) a higher-order active inference framework for multi-robot task allocation without communication, and ii) an epistemic planning framework for belief updates and iterative task allocation. 
To the best of our knowledge, this is the first paper combining epistemic logic and active inference with runtime task allocation adaptations and no communication. We show that our higher-order reasoning method outperforms traditional greedy task allocation and first-order active inference algorithms to allocate tasks in an environment without communication.

The rest of the paper is organized as follows: in Section~\ref{sec:relWork}, we provide an overview of current research in multi-robot task allocation, epistemic planning, and active inference. In Section~\ref{sec:prob}, we formally define the problem followed by the framework for epistemic planning, active inference for decision-making and task allocation in Section~\ref{sec:approach}. Simulations and experiments validating our method are presented in Sections~\ref{sec:sims} and~\ref{sec:experiments}, respectively. Finally, conclusions and future work are discussed in Section~\ref{sec:conclusion}.
\section{Related Work}\label{sec:relWork}

Theory of Mind (ToM) and epistemic planning are closely related concepts in artificial intelligence and cognitive science~\cite{ho2022planning}. ToM refers to the ability of an agent to attribute mental states, such as beliefs, desires, and intentions to others, enabling it to predict and interpret their actions~\cite{anderson2004integrated} while epistemic planning is a type of automated planning in artificial intelligence that deals with knowledge and beliefs of agents~\cite{bolander2011epistemic}. Integrating ToM into epistemic planning allows agents to anticipate and respond to the knowledge and beliefs of other agents, leading to more effective coordination and decision-making in multi-agent systems. The authors in~\cite{muise2022efficient} and~\cite{zhang2023adaptation} show that nested beliefs and reasoning in multi-agent planning can better equip agents to work in teams and show that this integration is crucial for applications requiring sophisticated interaction and collaboration among multiple intelligent agents. This paper characterizes ToM for a multi-robot system similar to~\cite{engesser2017cooperative} in that we employ epistemic planning as a logical mechanism to account for the system's knowledge and beliefs. Epistemic planning can adopt the perspectives of other robots within the system, reasoning about their knowledge and uncertainties, thereby preventing first-order reasoning deadlock. Previous multi-agent planners, such as in \cite{talamadupula2014coordination, buisan2021human, hwang2018dealing, liu2024event} typically maintain separate knowledge bases for each agent in the scenario. However, these static first-order representations lack the expressiveness required for more complex scenarios involving nested perspective-taking \cite{lemaignan2015mutual} and when the environment or the system changes over time. 
 
Dynamic Epistemic Logic (DEL) extends the concepts of epistemic planning and ToM by providing a formal logical framework to reason about changes in knowledge and beliefs over time~\cite{van2007dynamic}. While epistemic planning focuses on devising plans that consider the current epistemic states of agents, DEL specifically addresses how these states evolve through actions and observations~\cite{van2001games}. DEL, as a result, is a more flexible representation of the dynamics of knowledge and belief, enabling adaptive planning in scenarios where agents must continuously update their understanding of the world and each other’s mental states~\cite{ciardelli2015inquisitive}. In multi-robot systems, DEL allows each robot in the MRS to reason and plan using its beliefs of other robots' beliefs of the system while disconnected, updating its beliefs and policy if new actions or events are observed, and routing to communicate when necessary ~\cite{van2007dynamic}. DEL has recently been integrated into robotics applications. The method presented in~\cite{bolander2021based} recreates the Sally-Anne psychological test for human-robot interactions. Typical DEL-based multi-agent research uses epistemic planning for game theory-based policies~\cite{maubert2021concurrent}.

Active inference provides a probabilistic model for agents to make decisions and update beliefs by minimizing uncertainty and surprise~\cite{friston2016active}. Using Bayesian inference, agents predict sensory inputs and select actions aligned with their goals, incorporating their own and others' knowledge and beliefs. This inherently involves Theory of Mind and DEL, as agents model and anticipate others' mental states for effective interaction~\cite{pezzulo2018hierarchical}. Integrating active inference with ToM and epistemic planning enables sophisticated planning and decision-making, allowing agents to refine their understanding and actions for optimal outcomes in complex, multi-agent environments such as in~\cite{albarracin2022epistemic}. This connection is essential for developing intelligent systems capable of adaptive and cooperative behavior in uncertain and dynamic settings. Previous work on active inference in \cite{tian2020learning} utilized Policy Belief Learning to improve conveyance of intent through action, paired with a reward system, which incentivized actions that improve the overall understanding of the operational environment. Additionally, \cite{schack2024sound} evaluated how active inference could be performed in robot teams which can typically communicate and how the lack of communication can be exploited to better decrease uncertainty in an agent's beliefs of the world. Previous work was also performed in both leader-follower and leaderless models in \cite{maisto2023interactive} of which the similar approach to their leaderless work was used in our approach. Several works have incorporated realistic applications, including the authors in \cite{pezzato2023active} who use active inference and behavior trees were shown to improve the robustness of plans for a mobile manipulator. Additionally, authors in~\cite{ccatal2021robot} showed that perception, path generation, localization, and mapping naturally emerge from using active inference and minimizing free energy. 

Recently, a new perspective on the theory of mind (ToM) called the Bayesian mind has attracted attention, being suggested as a feedforward model for decision-making \cite{pezzulo2016navigating}. In this work, we extend the concept of the Bayesian mind using active inference as in \cite{priorelli2023flexible} and \cite{maisto2023interactive}, and incorporate dynamic epistemic tasking with higher-order reasoning. We demonstrate that while first-order reasoning can yield good results, higher-order reasoning provides more robust outcomes even in the absence of communication and presence of uncertain sensor measurements.

\section{Problem Formulation}\label{sec:prob}

Consider a MRS of $n$ robots in the set $\mathcal{R}$. We let $\bm{x}_i$ denote the state variable of the robot $i$ that evolves according to general dynamics at time $t$ such that:
\begin{equation}
   \bm{\dot{x}}_i(t) = \bm{f}_i(\bm{x}_i(t),\bm{u}_i(t),\bm{\nu}_i)
   \label{eq:robot_dynamics}
\end{equation}
where $\bm{u}_i(t)\in\mathbb{R}^{d_u}$ and the variable $\bm{\nu}_i\in\mathbb{R}^{d_\nu}$ denote the control input and zero-mean Gaussian process uncertainty, respectively. The function $\bm{f}_i$ represents the stochastic dynamics of robot $i$ given a control input and process uncertainty. We also assume that all robots are equipped with sensors that allow them to ascertain certain measurements from other robots in the system. A robot's continuous observation $\bm{y}_i(t)$ at time $t$ depends on its own position $\bm{x}_i(t)$ and sensor configuration $\omega_i$ (e.g., camera, lidar) such that:
\begin{equation}
    \bm{y}_i(t) = \bm{h}_i(\bm{x}_i(t),\omega_i) + \zeta.\label{eq:observations}
\end{equation} 
where the function $\bm{h}_i$ maps a robot $i$'s position to its observation $\bm{y}_i(t)$ given noise $\zeta$.

In addition, we let the set $G_{i}\subseteq G$ represent the subset of all tasks in $G$ that robot $i$ believes is assigned to the MRS. All possible combinations of these assignments are represented by the power set $\mathcal{P}(G)$ and valid configurations for the multi-robot mission are denoted as $\mathcal{G}\subseteq\mathcal{P}(G)$. 

For ease of discussion, in this work, we assume that all robots know the location of all tasks $G$ present and the sensor configuration $\omega_i$ of each robot $i$ in the system. To methodically allocate tasks to the multi-robot system, we first assign a subset of all tasks to the multi-robot system for time $\tau$ to decrease the complexity of assigning a distinct set of tasks to each robot. We represent this problem as a bi-level resource optimization problem~\cite{huang2023bilevel}:
\begin{problem}\label{problem1}{\bf{\emph{Upper-Level -- Epistemic System Tasking:}}}
Design an epistemic strategy for an MRS to allocate a subset of tasks from $\mathcal{V}$ to the system at any given time $t$, accounting for uncertainty in local observations and considering that robots are unable to communicate.
\end{problem}

\begin{problem}\label{problem2}{\bf{\emph{Lower-Level -- Intent Signaling for Subtask Assignment:}}} 
Given the subset of tasks to complete from the upper-level optimization, formulate a policy for effective intent signaling for each robot and efficient task completion. 
\end{problem} 

The mathematical formulation of Problem~\ref{problem1} and~\ref{problem2} can be expressed as:
\begin{align}
    \min_{x, z} & \quad \sum_{\tau = 1}^{|G|} c_\tau x_\tau + \sum_{i=1}^{|\mathcal{R}|}\sum_{\tau=1}^G c^{\prime}_{i\tau} z_{i\tau} \\
\text{subject to} & \quad \sum_{\tau=1}^{|G|} x_\tau \leq K \\
& \quad x_\tau \in \{0, 1\} \ \forall \tau\in \{1,\dots,|G|\}, \\
& \quad z_{i\tau} \in \argmin_{z'_{i\tau}} \sum_{i=1}^{|\mathcal{R}|}\sum_{\tau=1}^{|G|} b_{i\tau} z^{\prime}_{i\tau}
\label{eq:overallOpt}
\end{align}
The variable $c_i$ represents the cost associated with selecting task $\tau$ represented by the binary variable $x_\tau$. $c'_{i\tau}$ represents the cost associated with assigning robot $i$ to task $\tau$ where the assignment for robot $i$ at time $\tau$ is denoted by the binary variable $z_{i\tau}$. $b_{i\tau}$ represents the cost in the lower-level problem for assigning robot $i$ to task $\tau$. $K$ is a constant representing the maximum number of tasks that can be selected from the set $G$.
The upper-level objective function minimizes the total cost of selecting tasks and assigning robots, while the lower-level problem ensures the optimal assignment of robots to the selected subset of tasks. The constraints ensure that each task is assigned to exactly one robot if selected from the set $G$ and that each robot is assigned to at most one task.

\section{Approach}\label{sec:approach}
Our proposed framework is designed for a task allocation problem (TAP) in which robots are unable to communicate explicit information, but must instead signal their intent and infer other robot's intent in the system. When solving the TAP, robots must update their beliefs about the state of the system and also empathize with what others might believe. To realize these changes, each robot observes the observable states of each robot and reasons about the system in a hierarchical manner. Initially, each robot evaluates the subtasks that need to be allocated by the system at time $t$, using epistemic reasoning to converge a common belief about the allocation of tasks within the multi-robot system. Subsequently, each robot examines the evidence related to the movements of all robots in the system and ultimately signals its own intent, employing active inference to reduce the system's free energy. The diagram in Fig.~\ref{fig:main_frame_aif} illustrates this decentralized framework, where robots first gather observations about the MRS. These observations are then filtered based on previous measurements before generating and assessing allocations for the MRS to execute at time $t$. Upon generating these solutions, the resulting perspective of robot $i$ is denoted as $s_i$ and represents the possible assignments of robots to tasks.  

\begin{figure}[ht]
    \centering
    \includegraphics[width = 0.48\textwidth]{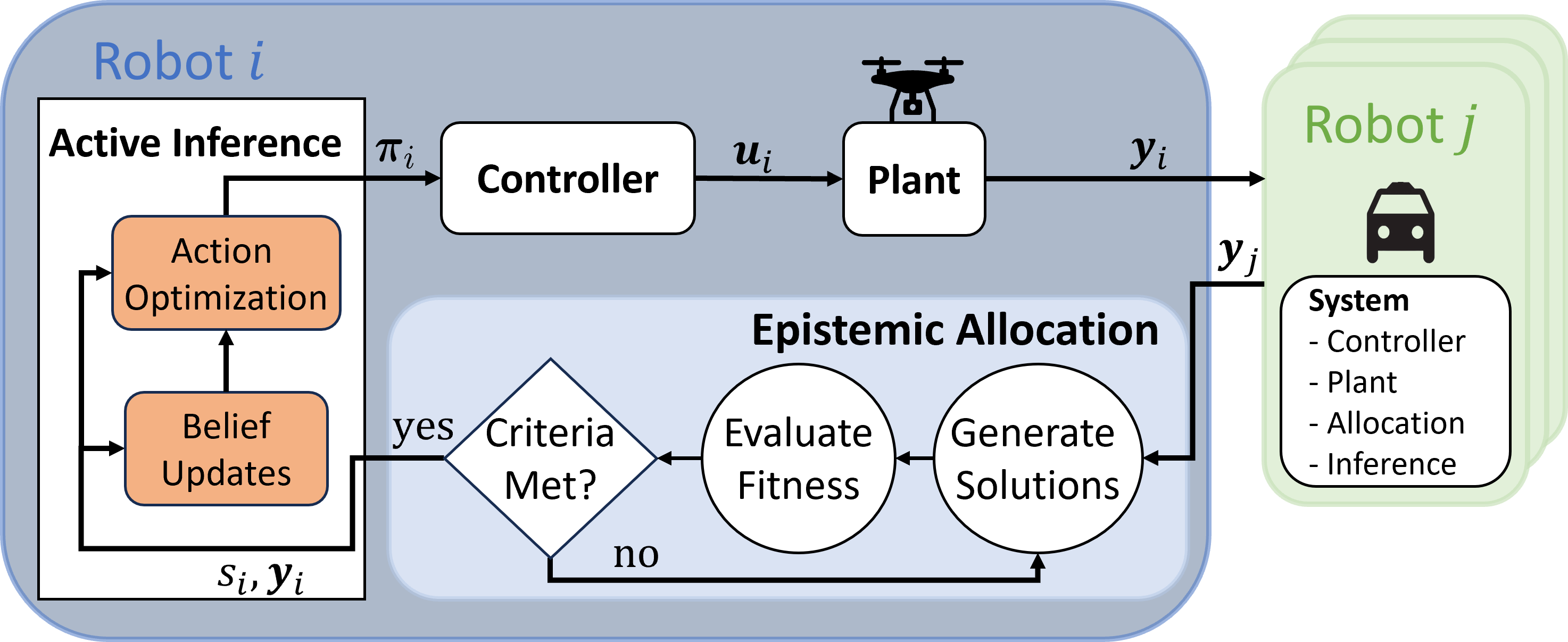}
    \caption{Diagram of the proposed approach}
    \label{fig:main_frame_aif}
\end{figure}

In the next sections, we will initially concentrate on the epistemic framing of the problem and how a robot can use higher-order reasoning to update its beliefs. We then show how active inference can be used to model how to measure the results of deeper reasoning to enhance the accuracy of non-communicative robots. We will also explore scalable runtime tools for employing this belief-based inference method before discussing the online epistemic distribution of subtasks to enable more effective reasoning during operation.

\subsection{Epistemic Structures for Multi-Robot Reasoning}
In this section, we frame the problem using dynamic epistemic logic (DEL) and epistemic planning to enable multi-robot goal selection in environments where direct communication between robots is not feasible. To overcome the challenges of coordination in this communication-limited environment, we use dynamic epistemic logic (DEL) and epistemic planning within each agent in our system. Our approach leverages DEL to model the knowledge, beliefs, and intentions of robots, integrating this with planning algorithms for goal selection and sequence optimization. To limit the possible combinations of execution policies for each robot to infer about the MRS, we use epistemic logic and allow the robots to reason about the system state. For this work, the epistemic language, $\mathcal{L}(\Psi,\AP,\mathcal{R}$)
is obtained as follows in Backus-Naur form~\cite{knuth1964backus}:
\begin{equation}
    \phi \Coloneqq H(\eta) \ | \ \phi\land\phi \ | \ \neg\phi \ | \ K_i\phi \ | \ B_i\phi  \nonumber
\end{equation}
where $i\in\mathcal{R}$. $H\in\Psi$ with $\Psi$ being a set of functions that describe the system state. $\eta$ generally denotes arguments for the functions in $\Psi$. $\neg\phi$ and $\phi\land\phi$ are propositions that can be negated and form logical conjunctions, where $\phi\in \AP$ and $\AP$ is a finite set of atomic propositions. We denote the set of possible worlds by $W$, where each world $w \in W$ represents a distinct state of the system where each robot is assigned to a subset of tasks. $K_i\phi$ and $B_i\phi$ are interpreted as ``robot $i$ knows $\phi$" and ``robot $i$ believes $\phi$", respectively. Practically, we consider $\phi$ to be the generic assignment of a robot to a task. 

We represent the global epistemic state as $s=(W,(R_i)_{i\in\mathcal{R}},L,w)$ where $L: W\rightarrow \AP $ assigns a label to each world defined by its true propositions and the accessibility relation $R_i$ represents the uncertainty of robot $i$ at run-time. An accessibility relation $R_i$ for robot $i$ defines which worlds are indistinguishable to $i$; that is, $R_i(w, v)$ holds if robot $i$ cannot distinguish between worlds $w$ and $v$. A robot may not be able to distinguish worlds if the \textit{evidence} associated with both worlds is equivalent or similar according to the uncertainty associated with our observations from \eqref{eq:observations}. 

Sequences of relations are used to represent higher-order knowledge. For example, the statement ``robot $i$ knows that robot $j$ knows $\phi$'' is true in $s$ if and only if $s\models K_i K_j\phi$. This condition is satisfied when $\phi$ is true in all worlds accessible from $w$ through the composite relation of $R_i$ and $R_j$. The perspective of robot $i$ on the system state is notated as $s_i$. 

Dynamic epistemic logic is expanded from epistemic logic through action models~\cite{bolander2021based}. These models affect a robot's perception of an event and influence its set of reachable worlds, $R_i$. A robot may plan to reduce the run-time uncertainty by taking actions. An action $a$ transforms a world $w$ to a world $w'$ such that if $w \models [a]\phi$, then $\phi$ holds in the resultant world $w'$.  Robots generate plans $\pi_i$ that are sequences of actions $\pi_i = \langle a_{i1}, a_{i2}, \ldots, a_{ik} \rangle$ leading from an initial state $s_i^0$ to a goal state $s_i^* \in \Gamma_i$. Here $\Gamma_i$ represents the set of epistemic goal states for the robot which are defined as the possible goal configurations the system aims to achieve.

\begin{equation}
\pi_i = \langle a_{i1}, a_{i2}, \ldots, a_{ip} \rangle
\end{equation}
such that
\begin{equation}
s_i^0 \xrightarrow{a_{i1}} s_{i1} \xrightarrow{a_{i2}} \cdots \xrightarrow{a_{ip}} s_i^* \in \Gamma_i.
\end{equation}

Coordination among robots is achieved through continuous observation and nested belief updates:
\begin{equation}
\bm{B}_i \leftarrow \bm{y}_i
\end{equation}
which represents what robot $i$ believes about robot $j$'s goal assignments, noting that $\bm{B}_i$ represents robot $i$'s nested beliefs about the system such that $\bm{B}_i\phi \models B_{i}\dots B_{r} \phi$ where the subscript denotes the nested belief of the $r^\text{th}$ robot from the perspective of robot $i$.
The planning process incorporates the robots' knowledge and beliefs:
\begin{equation}
\pi_i = \text{Plan}(s_i^0, \Gamma_i, \bm{B}_i)
\end{equation}
This allows robots to infer each other's goals:
\begin{equation}
\forall i, j \in \mathcal{R}, \quad B_i [\bm{y}_j] (\Gamma_j)
\end{equation}
If robot $i$ observes that robot $j$ is moving towards goal $g_m$, it updates its beliefs to $B_i (\Gamma_j = g_m)$ where if $\bm{B}_i (j \text{ is moving towards } g_m) \text{ then } \bm{B}_i (\Gamma_j = g_m)$.
Robots update their beliefs based on actions, $a$, and observations, $y_i$, using DEL:
\begin{equation}
\bm{B}_i \leftarrow a \quad \text{or} \quad \bm{B}_i \leftarrow \bm{y}_i
\end{equation}

This framework guarantees both soundness and completeness. The inference rules accurately represent the environment's state, and with enough observations, the goals can be inferred correctly.
Combining DEL with epistemic planning offers an effective strategy for managing multi-robot systems without the need for direct communication. However, refining these beliefs and developing a logical policy requires a probabilistic method that can manage complex reasoning. In the subsequent section, we merge the epistemic framework into the active inference model, utilizing nested beliefs and data gathering to signal a robot's intentions and infer the goals of other robots.

\subsection{Active Inference for Decision Making}
Active inference robots perform perception and action planning by minimizing variational free energy. To minimize free energy, these robots utilize a generative model that depicts the joint probability of the stochastic variables responsible for their perceptions~\cite{friston2017active}. Fig.~\ref{fig:gen_model_aif} shows our generative model for this framework where a robot receives observations $\bm{y}_i\in\mathcal{O}, \ \forall i\in \mathcal{R}$. Observations are then processed through generalized Bayesian filtering~\cite{friston2010generalised}, leading to an update in the beliefs of the system. Each robot can then utilize these revised beliefs to forecast the system's behavior. This process results in a robot $i$ creating a set of policies for all robots, but only able to control its own policy. However, these actions affect the environment and, in turn, the state of the system as perceived by other robots. This cycle continues, enabling the robots to infer the intentions of other robots and to use their own control policies to influence the beliefs of others. Active inference is distinct from perception or learning because it involves an active process driven by the goal of producing observations that are minimally surprising. 
\begin{figure}[ht]
    \centering
    \includegraphics[width=0.40\textwidth]{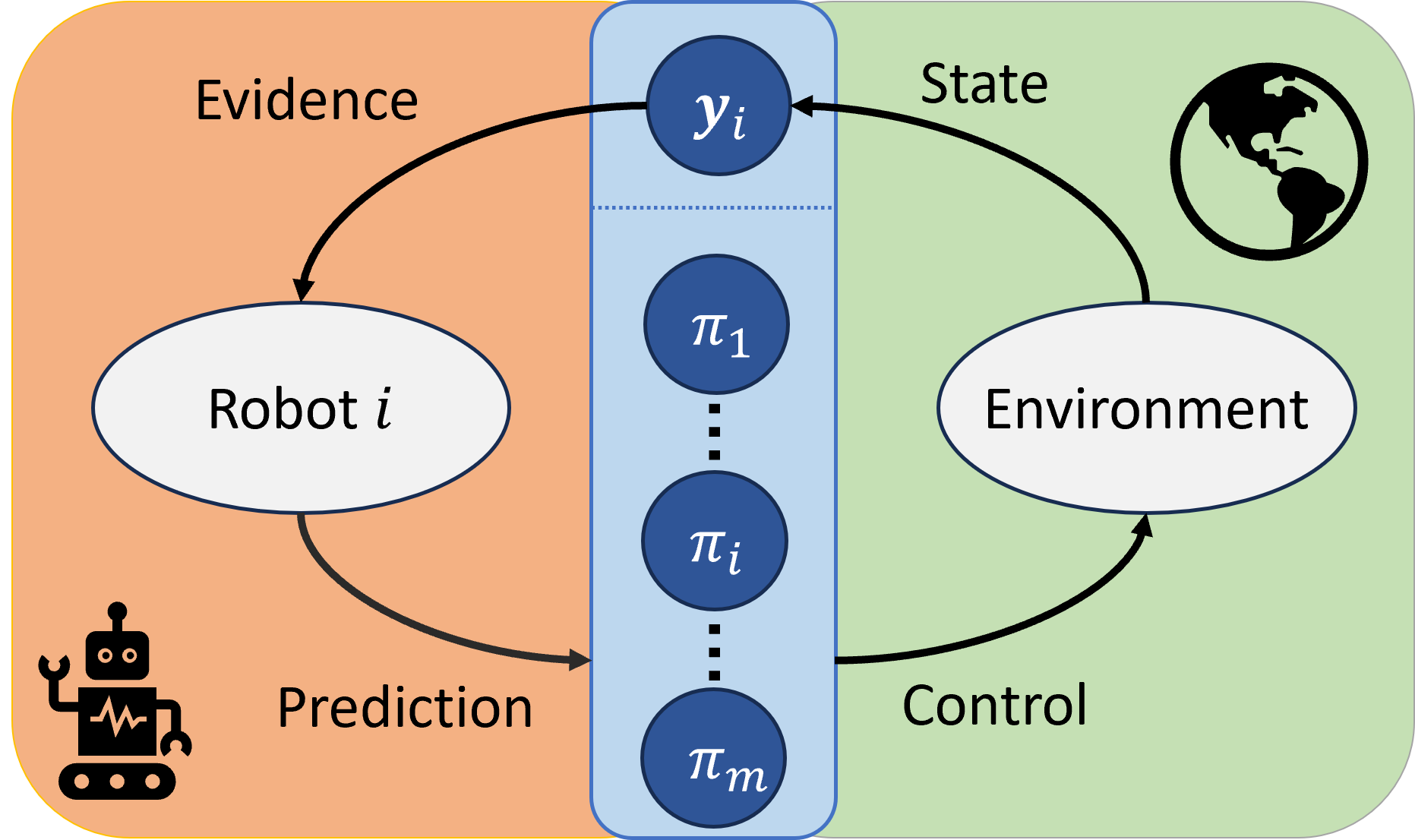}
    \caption{Overview of generative model and process used in our multi-robot application. We assume that the state is hidden and the robot is only able to observe using their own on-board sensing capability (e.g., depth sensors, cameras).}
    \label{fig:gen_model_aif}
\end{figure}

The generative model of any $i^\text{th}$ robot in our approach is mathematically defined similar to the formalism first introduced by~\cite{maisto2023interactive}; however, we augment this model with continuous states, observations, and actions represented in~\cite{parr2023generative}. We let the dynamics model for the generative model be represented by \eqref{eq:robot_dynamics}, influenced by control inputs $\bm{u}_i(t)$ and process noise. Next, we formulate the observation likelihood for a robot $i$'s observations $\bm{y}_i(t)$ at time $t$ as:
\begin{equation}
    P(\bm{y}_i(t) \mid \bm{x}_i(t), \omega_i) \sim \mathcal{N}(\bm{y}_i(t); \bm{h}_i(\bm{x}_i(t),\omega_i),\Sigma_i)
\end{equation}
where $\bm{h}_i$ maps the robot $i$'s position $\bm{x}_i(t)$ to its observation $\bm{y}_i(t)$ and $\Sigma_i$ denotes the covariance matrix that represents the effect of noise on the observation.

The two main components of active inference are belief updates and active selection. In our application, we note that each robot maintains a belief over the possible goal configurations for the multi-robot system. Depending on the application, these goal configurations should represent possible states that will accomplish a pre-defined mission. Each robot maintains this belief about the possible goal configurations that the system is performing at time $t$. We represent this posterior belief as:
\begin{equation}
    Q(\mathcal{G} \mid \bm{y}_i(t), \omega_i)
\end{equation}
where $\mathcal{G}$ is a subset of possible goal configurations that would accomplish the task allocation problem in \eqref{eq:overallOpt}. We use Bayes' rule to update the prior belief $P(\mathcal{G})$ based on the likelihood $P(\bm{y}_i(t)\mid \mathcal{G},\omega_i)$ derived from the observations and sensor configurations of robot $i$. This is modeled as follows:
\begin{equation}
    Q(\mathcal{G} \mid \bm{y}_i(t),\omega_i)\propto P(\bm{y}_i(t)\mid \mathcal{G},\omega_i)P(\mathcal{G}).
\end{equation}
These posterior updates are then used in the active selection component; however, we can only approximate the posterior given that we do not have direct access to the true system state. Therefore, we approximate the posterior $q(\mathcal{G})$ as follows:
\begin{equation}
    q(\mathcal{G}_i) \propto \bm{L}_i(\mathcal{G} \mid \bm{y}_i(t),\Omega)P(\mathcal{G})
    \label{eq:posterior_update}
\end{equation}
where robot $i$'s approximate posterior $q(\mathcal{G})$ is proportional to the product of the likelihood function $\bm{L}_i(\bm{y}_i(t),\Omega,\mathcal{G})$ and prior $P(\mathcal{G})$. The likelihood function is discussed further in the following section, but first we define how we use the belief update for the free energy calculation. 

By employing active inference, a robot executes a perception-policy loop through the application of the aforementioned matrices to hidden states and observations. In our scenario, perception involves estimating which of the valid goal configurations the system is achieving. At the start of any mission, the MRS might have access to a prior over goal configurations providing each robot with an initial state estimate, which is then refined by subsequent observations. 

For anticipated future states, the robot deduces the current hidden goal configuration $\mathcal{G}$ taking into account the expected transitions defined by the control $\bm{u}_t$ and general dynamics in \eqref{eq:robot_dynamics}. Active inference utilizes an approximate posterior for hidden states and control policies. As demonstrated by the authors in~\cite{smith2022step}, the distribution is most accurately approximated by minimizing the variational free energy (VFE), which is defined at time $t$ as:
\begin{align}
    \bm{F}(\bm{x}_i(t),&\bm{u}_i(t),\mathcal{G},\bm{y}_i(t),\omega_i) =\nonumber\\
    &H[q(\mathcal{G})] + D_{KL}[q(\mathcal{G})\parallel P(\bm{x}_i(t)\mid\bm{y}_i(t),\omega_i)]
\end{align}
where $H$ is the model uncertainty computed using Shannon entropy and $D_{KL}$ denotes the Kullback-Leibler (KL) divergence. This can be further generalized to expected free energy for a policy $\pi_i$:
\begin{align}
    \mathbb{E}_{\pi_i}[\bm{F}] = 
    &\mathbb{E}_{\pi_i}[H[q(\mathcal{G})]] + \nonumber\\
    &\mathbb{E}_{\pi_i}[D_{KL}[q(\mathcal{G}) \parallel P(\bm{x}_i(t)\mid\bm{y}_i(t),\omega_i)]]
\end{align}
The expected free energy (EFE) is a metric that integrates the entropy of the variational distribution $Q(\mathcal{G})$ with the expected log-likelihood of the generative model. By minimizing $\bm{F}$, robots adjust their beliefs to better approximate the true posterior distribution, balancing model complexity with alignment to observed data. The EFE comprises two components that assess the quality of the policy. The first component is the expected Kullbeck-Leibler divergence, which promotes low-risk policies by minimizing the discrepancy between the approximate posterior and the desired outcome. The second component is the expected entropy of the posterior over hidden states, representing the epistemic aspect of the quality score and encouraging policies that reduce uncertainty in future outcomes.

In this paper, we conceptualize the beliefs over hidden states $Q(\mathcal{G})$ for a multi-robot system (MRS) as a probabilistic framework that represents the likelihood of various goal configurations $\tilde{g} \in \mathcal{G}$ being the true state of the system. This approach allows us to encode a discrete array of probabilities that correspond to the different ways in which the robots might be arranged to achieve their respective objectives, particularly in scenarios where direct communication between robots is not feasible. 

To illustrate, consider the scenario depicted in Fig.~\ref{fig:intro_pic_aif}. In this example, there are two distinct goals: one that requires the collaboration of two robots and the other that can be accomplished by a single robot. The set of valid configurations in this scenario is given by $\mathcal{G} = [(0,1,1),(1,0,1),(1,1,0)]$. Each tuple in $\mathcal{G}$ represents a possible state of the system, where the elements of the tuple indicate the allocation of robots to each goal. For example, the configuration $(0,1,1)$ implies that one robot is assigned to the first goal, while the other two are assigned to the second goal. Thus, the belief distribution $Q(\mathcal{G})$ captures the probability that any of these configurations reflects the true state of the system. This probabilistic representation serves as a critical mechanism for decision-making, guiding robots in selecting actions that align with the most probable configurations. In doing so, the system can dynamically adapt to uncertainties and variations in the environment, optimizing the overall mission outcome despite the absence of explicit communication between robots. This method enables a form of implicit coordination in which robots rely on their shared probabilistic understanding of the mission to achieve complex objectives.

In the subsequent section, we discuss the particular likelihood function from \eqref{eq:posterior_update} used to revise a robot's belief regarding the hidden states of the system. By employing this function, robots can methodically gather evidence and deduce the actual state of the system, or, in this work, the correct goal configuration that assigns a goal to each robot.

\subsection{Higher-Order Evidence-Based Reasoning}
As previously mentioned in \eqref{eq:posterior_update},
given a set of valid goal configurations $\mathcal{G}$, a likelihood function is an important function to update a robot's approximate posterior belief about the hidden states $Q(\mathcal{G})$. A factor in interpreting likelihood based on observations is salience. Salience describes how prominent or emotionally striking something is. In neuroscience, salience is an attentional mechanism that helps organisms learn and survive by allowing them to focus on the most relevant sensory data. In our application, salience is the evidence that a robot is aligned with a goal $g_j\in G$. The set $G$ is different in that $G$ is the set of all goals in an environment, but $\mathcal{G}$ is the valid goal configurations for the multi-robot mission such that  $\mathcal{G}\subseteq\mathcal{P}(G)$ from $\mathcal{G}$.

We note that previous salience functions used in active inference and robotics literature such as in \cite{lison2010belief, maisto2023interactive}, typically only use up to first-order reasoning to define their evidence and subsequent posterior belief. We begin our formulation generally with the salience value defined as:
\begin{equation}
    \bm{e}_i^{(k)}\gets\bm{\upsilon}_i^{(k)}(\bm{y}_i,\Omega,G) = \exp\left(-\frac{1}{\eta} \bm{h}_i^{(k)}(\bm{y}_i,\Omega,G) \right)\label{eq:evidenceFunc}
\end{equation}
where the array $\bm{e}_i^{(k)}\in\R^{|G|}$ is the mapping of observations to evidence for goals in $G$ from the perspective of robot $i$ and given the sensor configurations of all robots in the system $\Omega$. The superscript $k$ denotes the level of reasoning at which the robot is evaluating its observations. Since a robot's observations are independent of other robot's observations, we can aggregate the evidence associated with each robot $i$'s perspective of other robots. For example, evidence can be evaluated as metrics such as distance to a goal or relative angles to a goal as shown in Fig.~\ref{fig:first_order_reasoning} and Fig.~\ref{fig:higher_order_reasoning}, respectively. 
Generally, the following function maps a positive evidence value to each goal configuration in $G$ such that:
\begin{equation}
    \bm{h}_i^{(k)}(\bm{y}_i,\Omega,G) = \sum_{a\in \mathcal{R}_k}\left[h_{ij,a}^{(k)}(\bm{y}_i,\Omega,g_j)\right]_{j=1}^{|G|}
    \label{eq:evidence_h}
\end{equation}
where $\mathcal{R}^k_i$ denotes the subset of robots that are considered for $k^\text{th}$-order reasoning from the perspective of robot $i$, the value $h_{ij,r}(\omega_i,\Omega,g_j)$ is positive for a goal $g_j\in G$, and robot $r$ that indicates if a robot is aligned with a goal $g_j$. The notation $|G|$ denotes the number of elements in the set $G$.

One can observe that in a heterogeneous system, a robot may not always be able to consider other robots' perspectives if the perspective is not measurable. As such, evidence $h_{ij,r}^{(k)}(\bm{y}_i,\Omega,G) = 0$ when a robot $i$ is unable to compute the evidence from the perspective of robot $r$ ($\omega_r\succ\omega_i)$. For example, consider Fig.~\ref{fig:example_reasoning}, a vehicle that might only be able to measure angles cannot abstract the depth information that other vehicles are able to observe, but angles are able to be abstracted from depth information. This information is represented in the set $\mathcal{R}_i^k\in\mathcal{R}$ and, as a result, no new information is mapped by robot $i$ from robot $r$'s perspective and is not able to inform the variational distribution for non-measurable perspectives for second- and third-order reasoning.

\begin{figure}[h]
    \centering
    \subfigure[Zero and First Order]{\fbox{\includegraphics[width=0.20\textwidth]{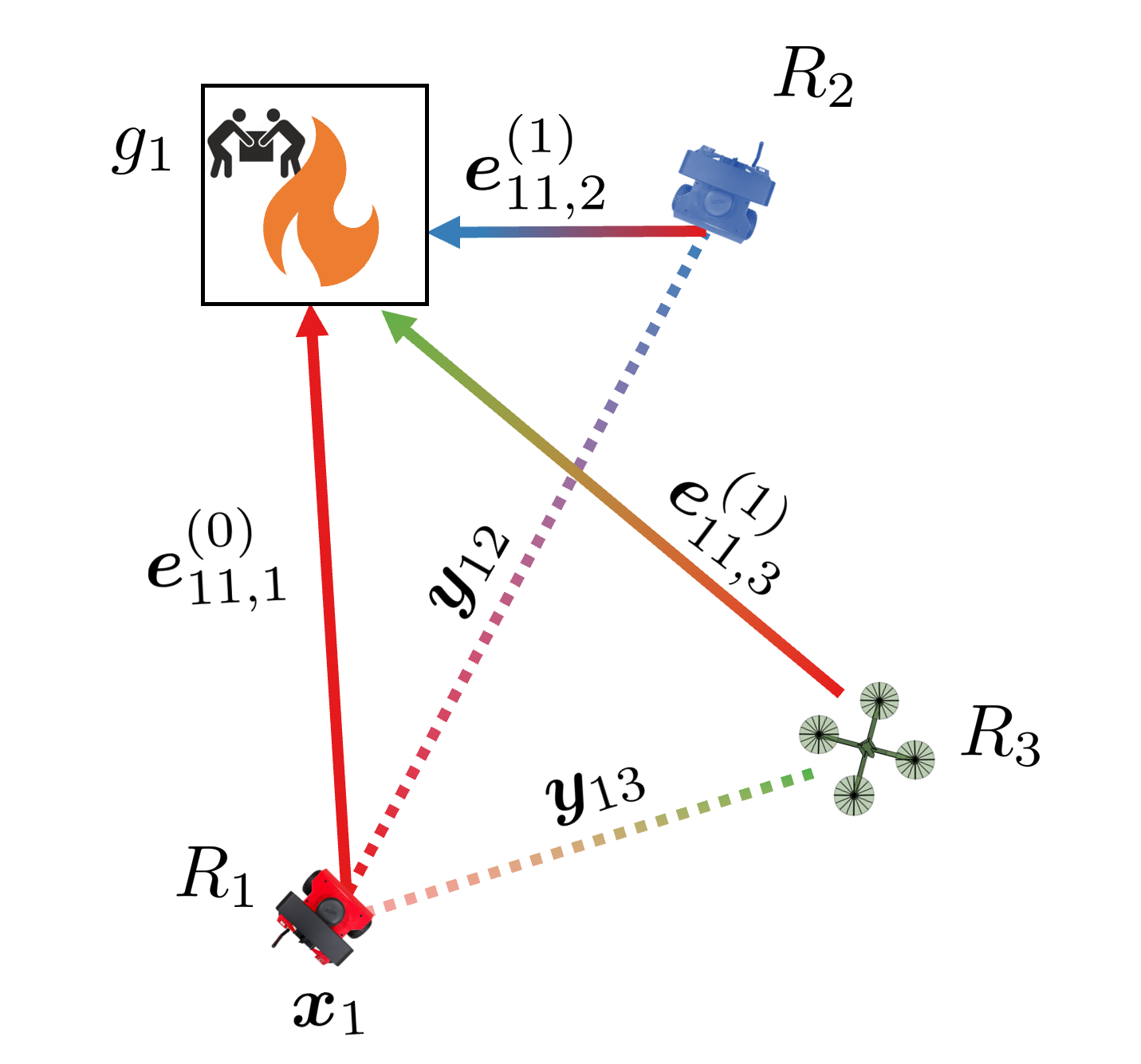}}
    \label{fig:first_order_reasoning}
    }
    \subfigure[Second and Third Order]{\fbox{\includegraphics[width=0.20\textwidth]{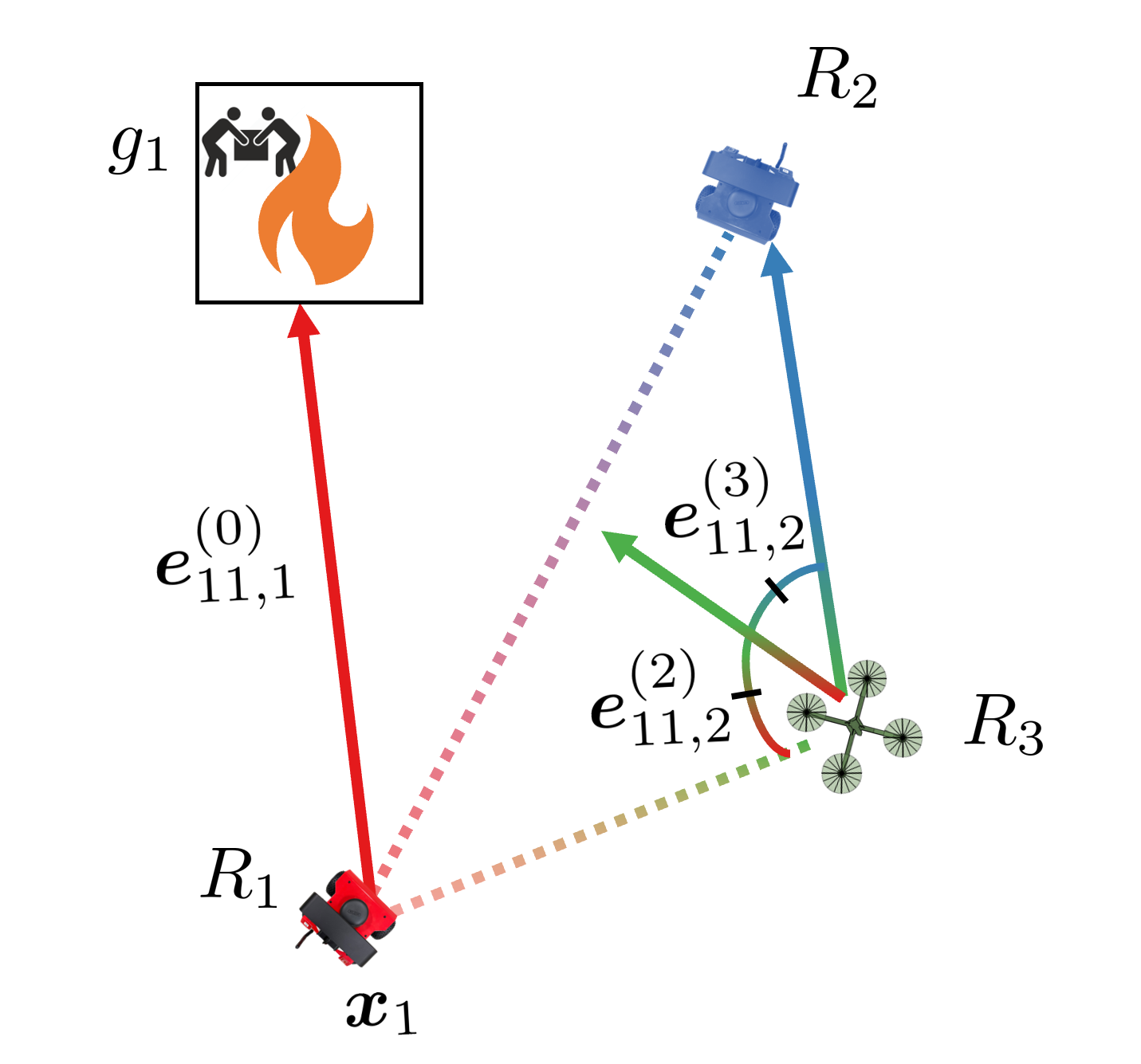}}
    \label{fig:higher_order_reasoning}
    }
    \caption{Pictorial depiction of observation mapping to evidence and depth of reasoning.}
    \label{fig:example_reasoning}
\end{figure}

Higher-order reasoning can be iteratively aggregated to form a comprehensive joint probability distribution. To calculate the likelihood of a robot being aligned with any particular goal in $G$, we let the probability distribution for $k^\text{th}$-order reasoning be represented as:
\begin{equation}
    P_i(G) = \sigma\left(\sum_k\bm{e}_i^{(k)}\right)\label{eq:higher-reasoning}
\end{equation}
where $\bm{e}_i^{(k)}$ represents the evidence gathered using \eqref{eq:evidenceFunc} and $\sigma$ is representative of the softmax function.
The belief over the goal configurations specified by $\mathcal{G}$ can be inferred using a joint probability distribution of the result from \eqref{eq:higher-reasoning}. We formulate the joint probability distribution we first initialize as:
\begin{equation}
    P_i^J(g)=1\ \forall g\in G^n 
\end{equation}
where $G^n$ represents all the possible combinations of $n$ robots assigned to $|G|$ tasks. Then we calculate the joint probability distribution for all possible goal configurations as
\begin{equation}
    P_i^J(\tilde{g}) = P_i^J(g_1,\dots,g_n) = \prod_{i = 1}^n P_i(g_i)
    \label{eq:joint_prob}
\end{equation}
where $P_i(g_i)\in P_i(G)$ and $\tilde{g}\in\mathcal{G}$. Additionally, the goals $g_1,\dots,g_n$ represents the allocation of a goal in the set $G$ to each robot. The result in \eqref{eq:joint_prob} gives the probability for all configurations in $\mathcal{G}$ and we extract and normalize the subset of valid configurations to form a distribution:
\begin{equation}
    L_i\left(\mathcal{G} \mid \bm{y}_i(t),\Omega\right) = \frac{P_i^J(\tilde{g})}{\sum_{\tilde{g}'\in \mathcal{G}} P_i^J(\tilde{g}')} \ \forall\tilde{g}\in\mathcal{G}
\end{equation}
where the likelihood $L_i\left(\mathcal{G} \mid \bm{y}_i(t),\Omega\right)$ can be used to update the prior from \eqref{eq:posterior_update}.

The joint likelihood associated with higher-order reasoning can increase the robustness of the overall system because of the integration of information across multiple layers. Suppose each independent likelihood has variance ($\sigma_i^2$). The combined variance in the joint likelihood can be lower due to the aggregation of information. In addition, joint likelihood can better manage the bias-variance tradeoff. While independent likelihoods may lead to higher variance due to lack of dependency modeling, a joint likelihood balances the bias introduced by modeling dependencies and the variance reduction due to joint estimation. The joint likelihood provides a more accurate expectation for each step, leading to more stable estimates.

\begin{lemma}
In a multi-robot system, incorporating higher-order reasoning (second- and third-order) reduces the variance of the robots' belief distributions compared to first-order reasoning by leveraging joint likelihoods that account for dependencies among robots, assuming that the errors in the observations from other robots are bounded and have a mean of zero.
\end{lemma}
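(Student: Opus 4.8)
The plan is to read the claim as a statement about the \emph{sampling} variability of a robot's belief: the likelihood vector $L_i(\mathcal{G}\mid\bm{y}_i,\Omega)$ in~\eqref{eq:posterior_update}, hence the posterior $q(\mathcal{G}_i)$, is a random object because $\bm{y}_i$ is noisy, and ``variance of the belief distribution'' is taken to mean the trace of its covariance under that noise. First I would fix the error model implied by the hypothesis: write $\bm{y}_i=\bar{\bm{y}}_i+\bm{\zeta}_i$ with $\mathbb{E}[\bm{\zeta}_i]=0$ and $\|\bm{\zeta}_i\|\le B$, and push it through the evidence maps~\eqref{eq:evidenceFunc}--\eqref{eq:evidence_h}. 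Since $\bm{h}_i^{(k)}$ is a finite sum of the smooth per-robot scores $h_{ij,a}^{(k)}$, a first-order Taylor expansion about $\bar{\bm{y}}_i$ gives $\bm{h}_i^{(k)}(\bm{y}_i,\Omega,G)=\bm{h}_i^{(k)}(\bar{\bm{y}}_i,\Omega,G)+J_i^{(k)}\bm{\zeta}_i+\bm{r}_i^{(k)}$ with $\|\bm{r}_i^{(k)}\|=O(B^2)$; the zero-mean assumption makes the leading stochastic term $J_i^{(k)}\bm{\zeta}_i$ mean-zero, so each order-$k$ log-evidence is an (approximately) unbiased estimate of the true alignment score, differing only by a finite-variance perturbation.

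The core of the argument then lives in the logit domain, where the renormalization in $P_i(G)=\sigma(\sum_k\bm{e}_i^{(k)})$ and the product in the joint likelihood~\eqref{eq:joint_prob} both become affine or additive. Using the independence of distinct robots' observations already asserted in the text (the property that justifies aggregating evidence ``associated with each robot $i$'s perspective of other robots''), the errors entering the second- and third-order channels indexed by $a\in\mathcal{R}_i^{k}$ are uncorrelated across robots, so the aggregate log-evidence $\sum_k\bm{e}_i^{(k)}$ is, to first order, a sum of $m$ mean-zero finite-variance terms, with $m=|\mathcal{R}_i^{1}|$ under first-order reasoning and $m'=|\mathcal{R}_i^{1}|+|\mathcal{R}_i^{2}|+|\mathcal{R}_i^{3}|\ge m$ once higher orders are included. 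I would then invoke the elementary aggregation fact (the bias--variance remark in the paragraph preceding the lemma, made precise): for unbiased estimators with covariances $\Sigma_j$ and cross-correlations bounded by $1$, any convex/softmax-induced pooling satisfies $\mathrm{tr}\,\mathrm{Cov}[\text{pool}]\le\max_j\mathrm{tr}\,\Sigma_j$, and the $m'$-term pool has trace-covariance no larger than the $m$-term pool whenever the extra channels are not perfectly collinear with the first-order one. Propagating this through the softmax by the delta method (its Jacobian $\mathrm{diag}(p)-pp^{\top}$ has operator norm $\le 1$) and through the product normalization~\eqref{eq:joint_prob} (each factor multiplicative, a logarithm linearizes it) transfers the inequality to $\mathrm{tr}\,\mathrm{Cov}[L_i^{(1:3)}]\le\mathrm{tr}\,\mathrm{Cov}[L_i^{(1)}]$, and hence to $q(\mathcal{G}_i)$ via~\eqref{eq:posterior_update}, which is the claim.

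The main obstacle is that none of the maps involved --- the exponential in~\eqref{eq:evidenceFunc}, the softmax in~\eqref{eq:higher-reasoning}, the product-then-renormalize in~\eqref{eq:joint_prob} --- is linear, so the clean ``variance of an average'' bound holds exactly only in the linearized regime; making it rigorous requires controlling the Taylor remainders $\bm{r}_i^{(k)}$ and the delta-method error uniformly, and this is exactly where the boundedness hypothesis $\|\bm{\zeta}_i\|\le B$ is doing the work: it confines $\bm{y}_i$ to the compact set $\{\|\bm{y}_i-\bar{\bm{y}}_i\|\le B\}$, on which all relevant Jacobians and Hessians are bounded, so the second-order contributions are $O(B^2)$ and the first-order variance comparison dominates for $B$ sufficiently small. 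A secondary subtlety is that ``dependencies among robots'' cut both ways --- strongly positively correlated higher-order channels would make the reduction negligible --- so I would state the conclusion as a non-strict inequality in general, strict under a mild non-degeneracy (non-collinearity) condition on the evidence Jacobians $J_i^{(k)}$, and note that the heterogeneity rule $h_{ij,r}^{(k)}=0$ when $\omega_r\succ\omega_i$ merely deletes channels from the sum and so never reverses the direction of the bound.
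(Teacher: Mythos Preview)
Your proposal is sound and, in fact, more technically careful than the paper's own argument, but it takes a genuinely different route. The paper's proof works at the level of the belief distributions themselves: it writes the second-order softmax with the other robots' first-order beliefs plus their error terms inside the exponent, asserts that the bounded mean-zero errors ``average out'' across robots, and then appeals to a decrease in the KL divergence $D_{\text{KL}}(P_i^J(\tilde{g})\parallel\mathbb{E}[P_i^J(\tilde{g})])$ and in the Shannon entropy $H(P_i^J(\tilde{g}))$ as proxies for reduced variance. Your approach instead fixes a precise meaning for ``variance'' (trace of the sampling covariance of the likelihood vector), linearizes in the logit domain via a first-order Taylor expansion and the delta method, and reduces the claim to the elementary pooling inequality for uncorrelated unbiased estimators, with the boundedness hypothesis used explicitly to control the $O(B^2)$ remainders. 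What your approach buys is a clear statement of what is being compared, an explicit role for every hypothesis (mean-zero for unbiasedness, boundedness for remainder control, observation independence for the pooling step), and an honest caveat that the inequality is non-strict unless the higher-order evidence Jacobians are non-collinear with the first-order one. What the paper's approach buys is brevity and a direct connection to the free-energy quantities (KL, entropy) already used elsewhere in the framework, at the cost of leaving the averaging and the KL-to-variance implication largely unproven.
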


\begin{proof}
We aim to prove this result by direct proof. Specifically, we will show that higher-order reasoning leads to a reduction in the variance of the belief distributions over goal configurations compared to first-order reasoning, under the assumption that the errors in the observations from other robots are bounded and have a mean of zero.

Consider a multi-robot system where each robot $i$ evaluates the evidence $\bm{e}^{(1)}_i$ for each goal $g \in G$ based on its own observations, leading to a first-order probability distribution:
\begin{equation}
    P_i^{(1)}(g) = \frac{\exp(\bm{e}^{(1)}_i(g))}{\sum_{g' \in G} \exp(\bm{e}^{(1)}_i(g'))}.
\end{equation}
This first-order distribution, $P_i^{(1)}(g)$, has a variance denoted by $\sigma_i^{2}$. However, this distribution does not incorporate the dependencies or the information that might be inferred from the perspectives of other robots.

Assume that the observations made by a robot $i$ are subject to an error term $\zeta$ from \eqref{eq:observations} and is proportional to the error of the salience measure for each pair of robots and goals, $\epsilon_{j}(g)$, which is bounded and has a mean of zero:
\begin{equation}
    \epsilon_i(g) \sim \mathcal{N}(0, \sigma_\epsilon^2) \text{ and } |\epsilon_i(g)| \leq \epsilon_{\text{max}} \text{ for all } g \in G.
\end{equation}

When higher-order reasoning is introduced, robot $i$ adjusts its belief by considering the evidence from other robots, which includes their respective error terms. For instance, in second-order reasoning, the probability distribution becomes:
\begin{equation}
    P_i^{(2)}(g) = \frac{\exp \left( \bm{e}^{(1)}_i(g) + \sum_{j \neq i} \mathbb{E}[P_j^{(1)}(g) + \epsilon_j(g)] \right)}{\sum_{g' \in G} \exp \left( \bm{e}^{(1)}_i(g') + \sum_{j \neq i} \mathbb{E}[P_j^{(1)}(g') + \epsilon_j(g')] \right)}.
\end{equation}

Because the errors $\epsilon_j(g)$ are bounded and have a mean of zero, their contribution to the variance of $P_i^{(2)}(g)$ will average out as more robots' beliefs are considered. This process reduces the overall variance in the joint probability distribution, which can be expressed as:
\begin{equation}
    P_i^J(\tilde{g}) = \prod_{g \in \tilde{g}} P_i(g).
\end{equation}

To quantify the reduction in variance, we examine the Kullback-Leibler (KL) divergence between the robots' joint beliefs and their expected beliefs:
\begin{equation}
    D_{\text{KL}}\left(P_i^J(\tilde{g}) \parallel \mathbb{E}[P_i^J(\tilde{g})]\right).
\end{equation}

Since the errors in observations are bounded and mean-zero, the KL divergence decreases as the robots' beliefs become more aligned, indicating a convergence towards a common understanding. This reduced divergence leads to a decrease in the variance of the belief distribution:
\begin{equation}
    \sigma^2_{\text{joint}} = \text{Var}\left(\prod_{g \in \tilde{g}} P_i(g)\right) < \sigma^2_{\text{first-order}}.
\end{equation}

Finally, the reduced variance is reflected in the decreased entropy of the joint belief distribution:
\begin{equation}
    H(P_i^J(\tilde{g})) = - \sum_{\tilde{g} \in \mathcal{G}} P_i^J(\tilde{g}) \log P_i^J(\tilde{g}),
\end{equation}
where $H(P_i^J(\tilde{g})) \leq H(P_i^{(1)}(\tilde{g}))$. Lower entropy indicates that the robots' beliefs are more certain and less dispersed, which corresponds to reduced variance.

In summary, by incorporating higher-order reasoning and considering bounded, mean-zero errors in observations from other robots, the system reduces the KL divergence and variance in the belief distribution, leading to more stable, accurate predictions. This proves that higher-order reasoning is beneficial for reducing uncertainty and improving the overall performance of the multi-robot system.
\end{proof}

Higher-order reasoning models, even when based on first-order measurements, reduce the overall variance of parameter estimates by capturing dependencies and interactions between different layers of reasoning. This leads to enhanced robustness, as the model can provide more stable and reliable estimates in the presence of noise and uncertainties. 

We motivate these formulations by again considering the example shown in Fig.~\ref{fig:example_reasoning}. Consider a multi-robot system consisting of two ground vehicles equipped with depth sensors and one aerial vehicle equipped with a monocular camera. The robots are attempting to allocate tasks without communication and ground robot $R_1$ is assessing evidence between a goal $g_1$ and other two robots $R_2,R_3$. In Fig.~\ref{fig:first_order_reasoning}, we show the observations and subsequent evidence calculation for $R_1$'s sensor configuration which allows $R_1$ to calculate the distance to the goal for both itself (zero-order reasoning) and other robots (first-order reasoning). In Fig.~\ref{fig:higher_order_reasoning}, since $R_1$ is equipped with a depth sensor, it is also able to abstract the sensor measurements of $R_3$, allowing $R_1$ to estimate the change in $R_3$'s belief.

We note that through minimizing the expected free energy and using the likelihood function to update the posterior we will maximize the probability of converging to a common goal configuration without any communication. However, the Bayesian method inherently suffers from the curse of dimensionality, since the number of possible joint configurations grows as the number of robots and/or the number of goals grows. Thus, in the next section, we introduce epistemic planning for reducing the goal configurations possible in each iteration.

\subsection{Epistemic Allocation for Dimensionality Reduction}

Despite the benefits of salience in managing information overload, the challenge of dimensionality remains. To address this, we use epistemic allocation, which leverages the principles of epistemic logic to reduce the set of possible configurations in $\mathcal{G}$, thereby enhancing the system's scalability and robustness. Epistemic planning involves robots making decisions based on their knowledge and beliefs, with the aim of reducing uncertainty and achieving goals in a shared environment. In this context, the function for choosing subset goals plays a crucial role. Let $B_i(g_j)$ represent robot $i$'s belief about goal $g_j$. The belief update mechanism uses a softmax function over the evidence $e_{ij}$ between robot $i$ and goal $j$, formalized as:

\begin{equation}
B_i(g_j) = \sigma\left(\sum_k \bm{e}_{ij}^{(k)}\right)
\end{equation}

This update represents the probability that robot $i$ believes goal $g_j$ is achievable, given the evidence. With higher-order reasoning, robots can collectively maximize the diversity of evidence. This involves each robot considering not just their perspective but the perspectives of all robots to select goals that maximize the collective knowledge. Formally, the set of chosen goals $G_c$ can be described as:

\begin{equation}
G_c = \{g_j \mid \arg \max_j B_i(g_j), \forall i\}
\end{equation}

This selection ensures that the chosen goals maximize the diversity and coverage of evidence across all robots, thus enhancing the collective knowledge and reducing overall uncertainty.

\section{Simulations}\label{sec:sims}
This section showcases the outcomes obtained through Python simulations of our method executed by multi-robot team. We compare zero-order reasoning~\cite{gutin2002traveling}, a first-order reasoning baseline derived from~\cite{maisto2023interactive,priorelli2023flexible}, and our method employing higher-order reasoning to reach a valid goal configuration. This section compares these levels of reasoning in three different scenarios: rendezvous, where robots converge to a common goal; task allocation, where robots converge to separate goals; and multi-task allocation, where robots individually complete sequential goals. Each scenario and its respective comparison are discussed in the following subsections.

\subsection{Rendezvous}
In the first scenario comparison, robots are tasked with converging to a single goal amongst several possible choices. Random configurations of goal locations, robot sensor configurations, and starting positions of robots are generated for 50 trials per each combination of robots and goals. The number of robots and goals varies between two and five. The size of the environment for each test is set at $30$m$\times 30$m and the maximum number of iterations or time steps per simulation is set to 150 iterations. The maximum velocity for each robot is 1m/s, and the multi-robot system has converged if all robots reach a single goal within 150 iterations and are within 1.5m of the position of the goal. The observation error is normally distributed as $\mathcal{N}(0,0.5)$ for distance measurements and $\mathcal{N}(0,0.1)$ for angular measurements. The multi-robot system is randomly spawned with one of two different types of sensor configurations. One sensor configuration is a range sensor ($\omega_1$) which can observe distance measurements to other robots, while the other configuration ($\omega_2$) can measure relative angles to the observing robot's position. We consider that $\omega_1\succ\omega_2$ since the robots are capable of abstracting angle measurements from distance measurements. We show a sample result in Fig.~\ref{fig:converge_sample_comparison} comparing first-order reasoning and higher-order reasoning in a sample environment where two robots with two different sensor configurations are trying to converge to a single goal. The red UAV can observe angles ($\omega_2$) while the blue UGV can measure distances ($\omega_1$). 
\begin{figure}[ht]
    \centering
    \subfigure[First-order reasoning]{
    \fbox{\includegraphics[width=0.22\textwidth]
    {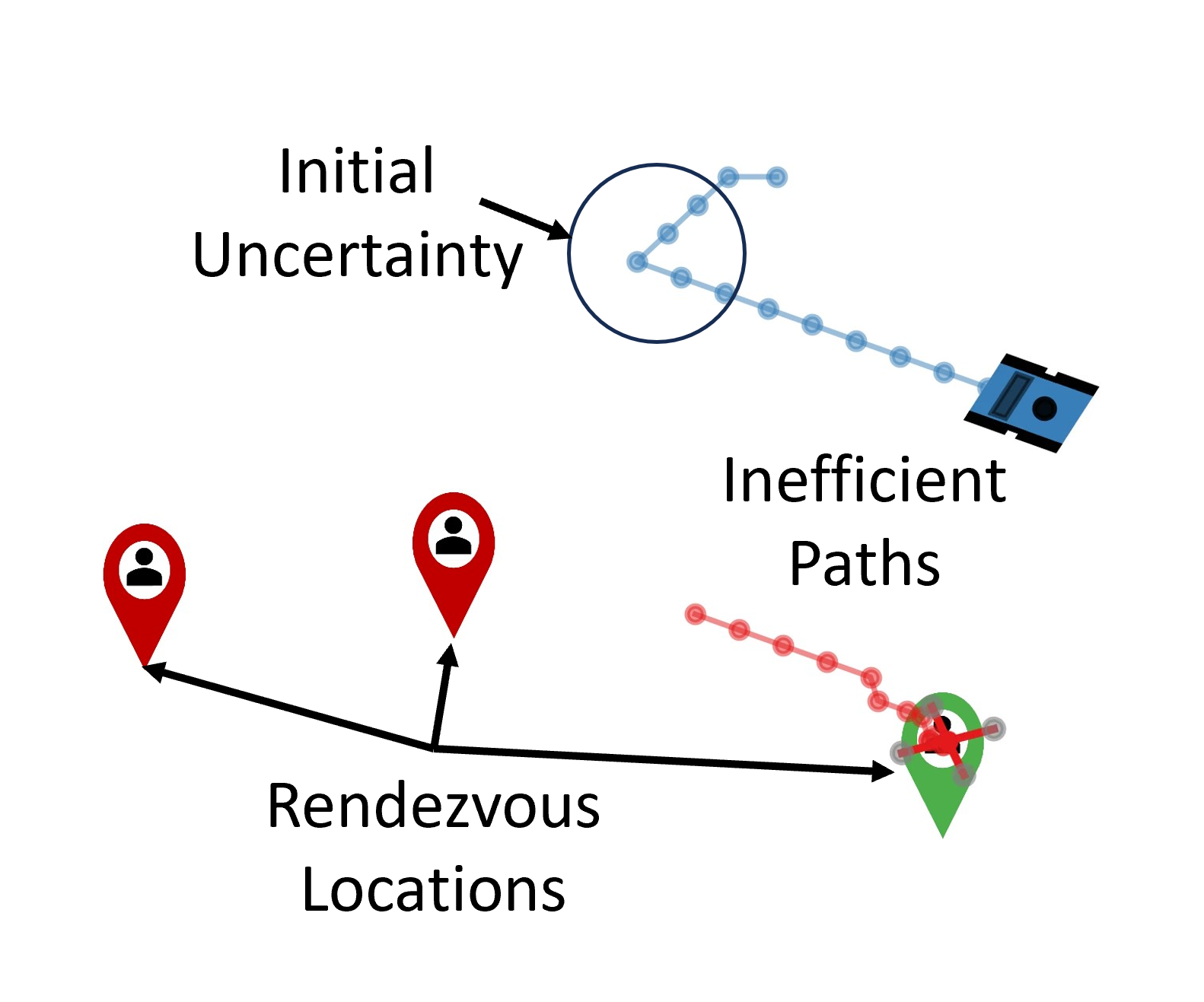}}
    \label{fig:converge_noEP}
    }%
    \subfigure[Higher-order reasoning]{
    \fbox{\includegraphics[width=0.22\textwidth]{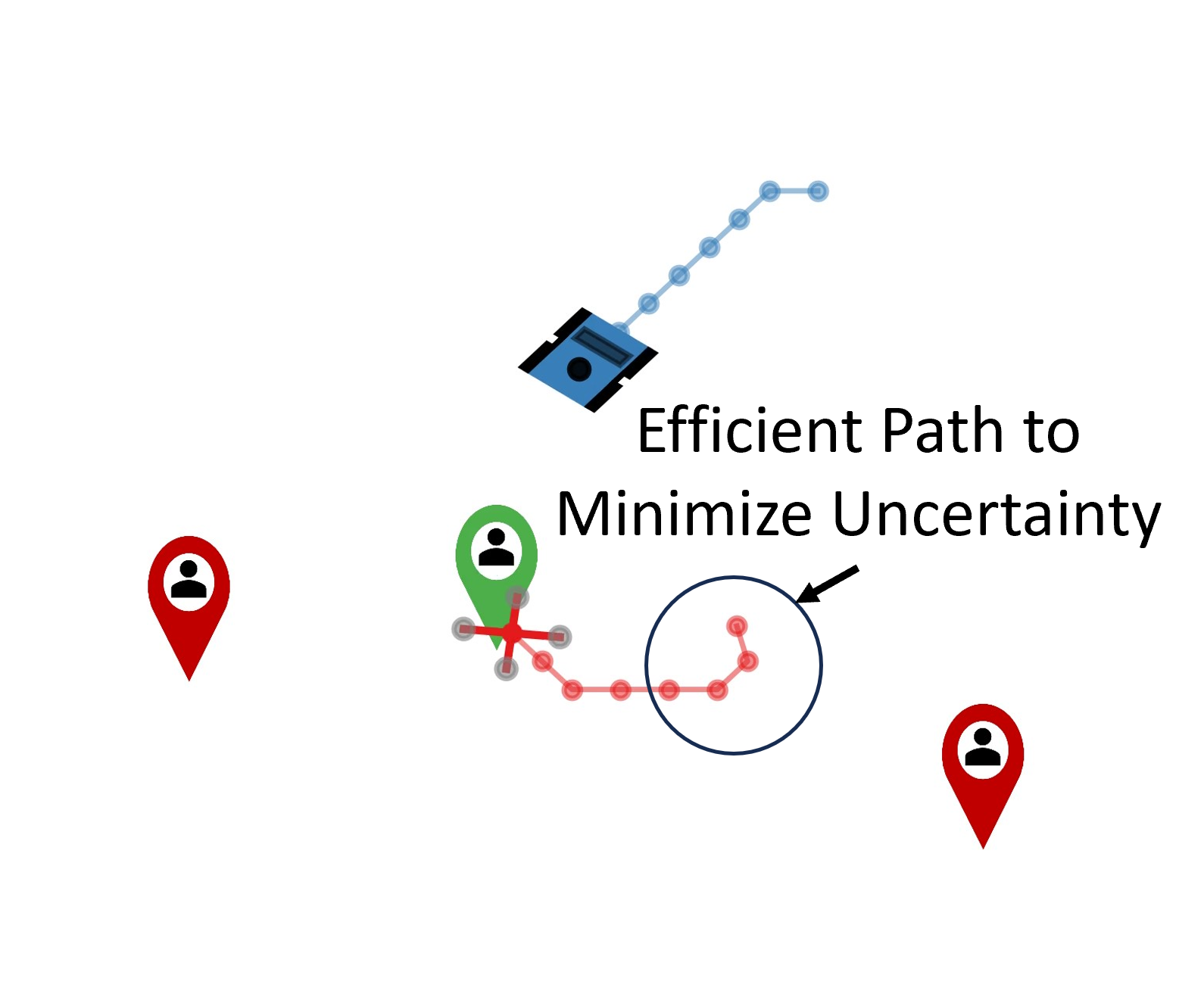}}
    \label{fig:converge_useEP}
    }
    \caption{Sample comparison of where in first-order reasoning the red UAV does not consider the blue UGV's perception of its movements}
    \label{fig:converge_sample_comparison}
\end{figure}

As shown in Fig.~\ref{fig:converge_noEP}, the red UAV does not consider the blue UGV's perception of its movements to gain more certainty about its observations before the robots end up converging to a goal farther away. In contrast, higher-order reasoning allowed the red UAV to make small movements to gain more certainty about its observations before converging to the closer goal. The results shown in this example explain how first-order reasoning is more prone to fail when the objective is to rendezvous at a goal. The results of all trials are depicted in Fig.~\ref{fig:converge_comparison_graph} which show that higher-order reasoning results in a higher success rate and that an increase in complexity does not result in a significant decrease in success, while the performance of zero and first-order reasoning continues to decrease exponentially as the number of robots increases.
\begin{figure}[ht]
    \centering
    \fbox{\includegraphics[width=0.45\textwidth]{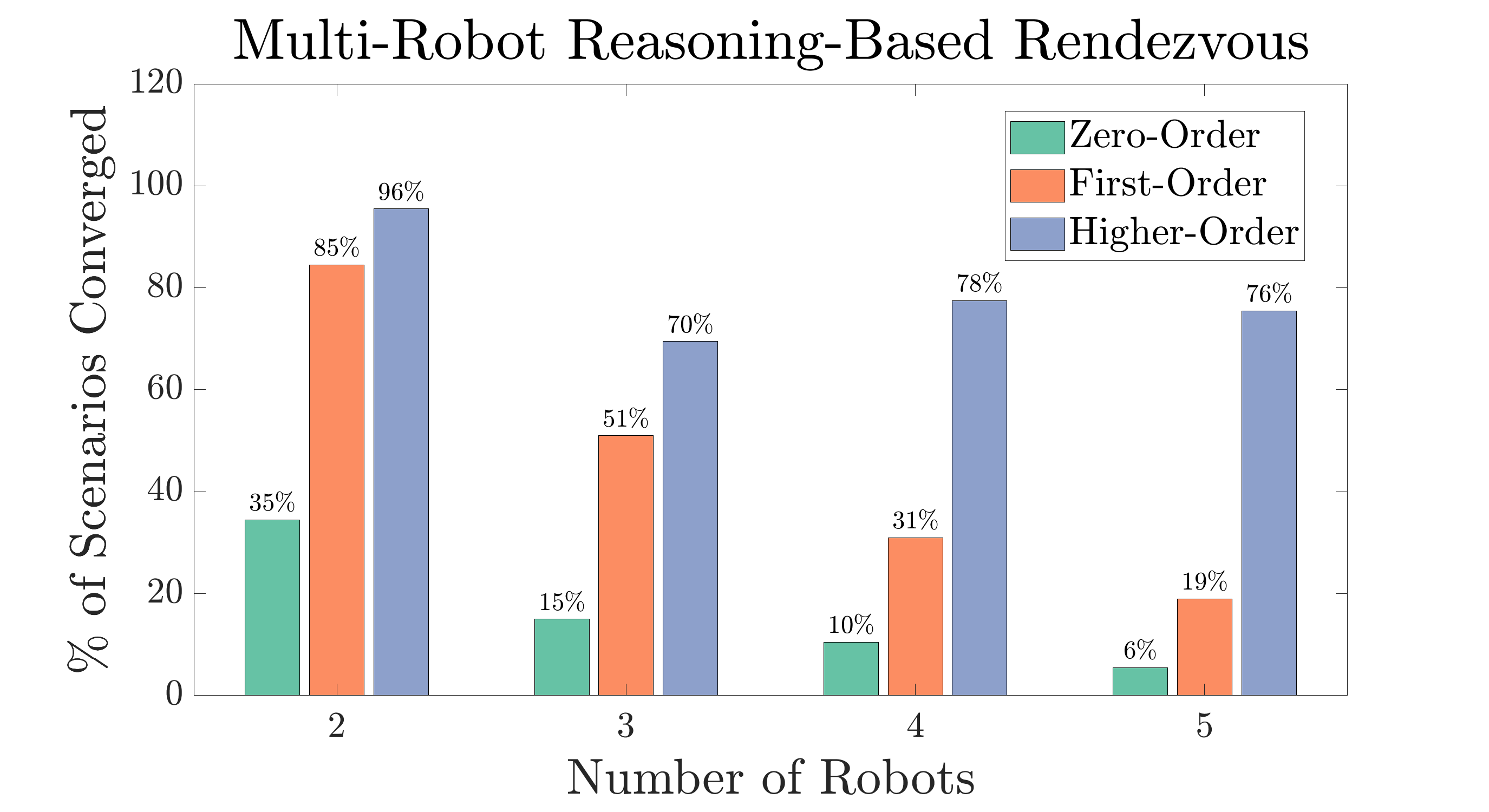}}
    \caption{Comparison of using zero- and first-order versus higher-order reasoning for a rendezvous mission.}
    \label{fig:converge_comparison_graph}
\end{figure}

\subsection{Task Allocation}
In the second set of trials, we show a comparison between the same zero- and first-order reasoning baselines and higher-order reasoning when converging to separate goals. Goal locations, robot sensor configurations, and initial robot positions are randomly generated for 50 trials per number of robots, which range from two to five robots. In these comparisons, the number of tasks is equal to the number of robots. The environment size for each trial is set at $30$m$\times 30$m and the maximum number of iterations is set to 100 iterations. The maximum velocity for each robot is 1m/s and the multi-robot system has converged if all robots reach a separate goal within 150 iterations and are within 1.5m of the position of the goal. Observation error is normally distributed as $\mathcal{N}(0,0.5)$ for distance measurements and $\mathcal{N}(0,0.1)$ for angular measurements. The multi-robot system is randomly spawned with one of two different types of sensor configurations as in previous sections. We show a sample result in Fig.~\ref{fig:allocation_sample_comparison} comparing first-order reasoning and higher-order reasoning in a sample environment where two robots with two different sensor configurations are trying to converge to a single goal.

\begin{figure}[ht]
    \centering
    \subfigure[First-order reasoning]{
    \fbox{\includegraphics[width=0.22\textwidth]
    {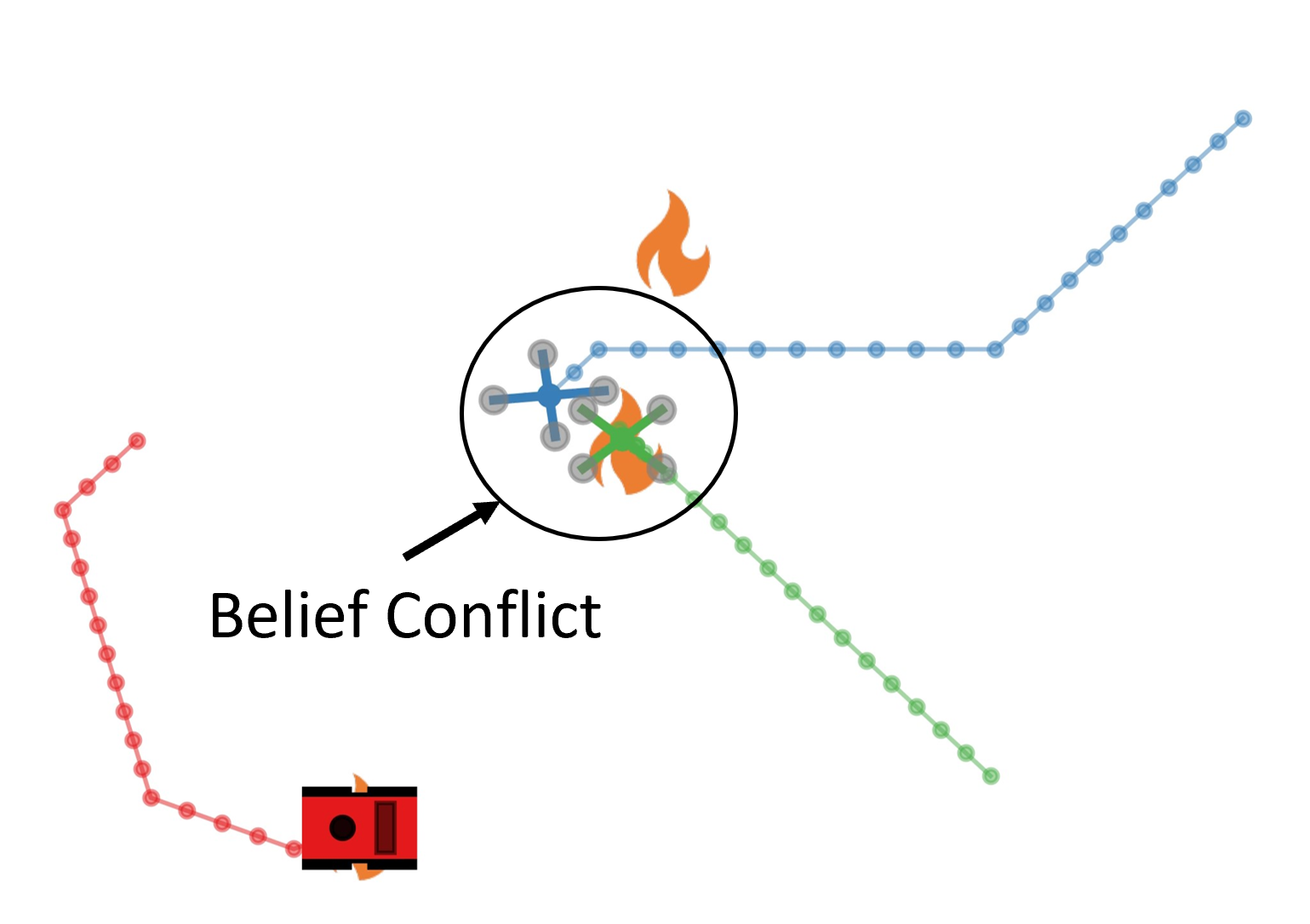}}
    \label{fig:diverge_noEP}
    }%
    \subfigure[Higher-order reasoning]{
    \fbox{\includegraphics[width=0.22\textwidth]{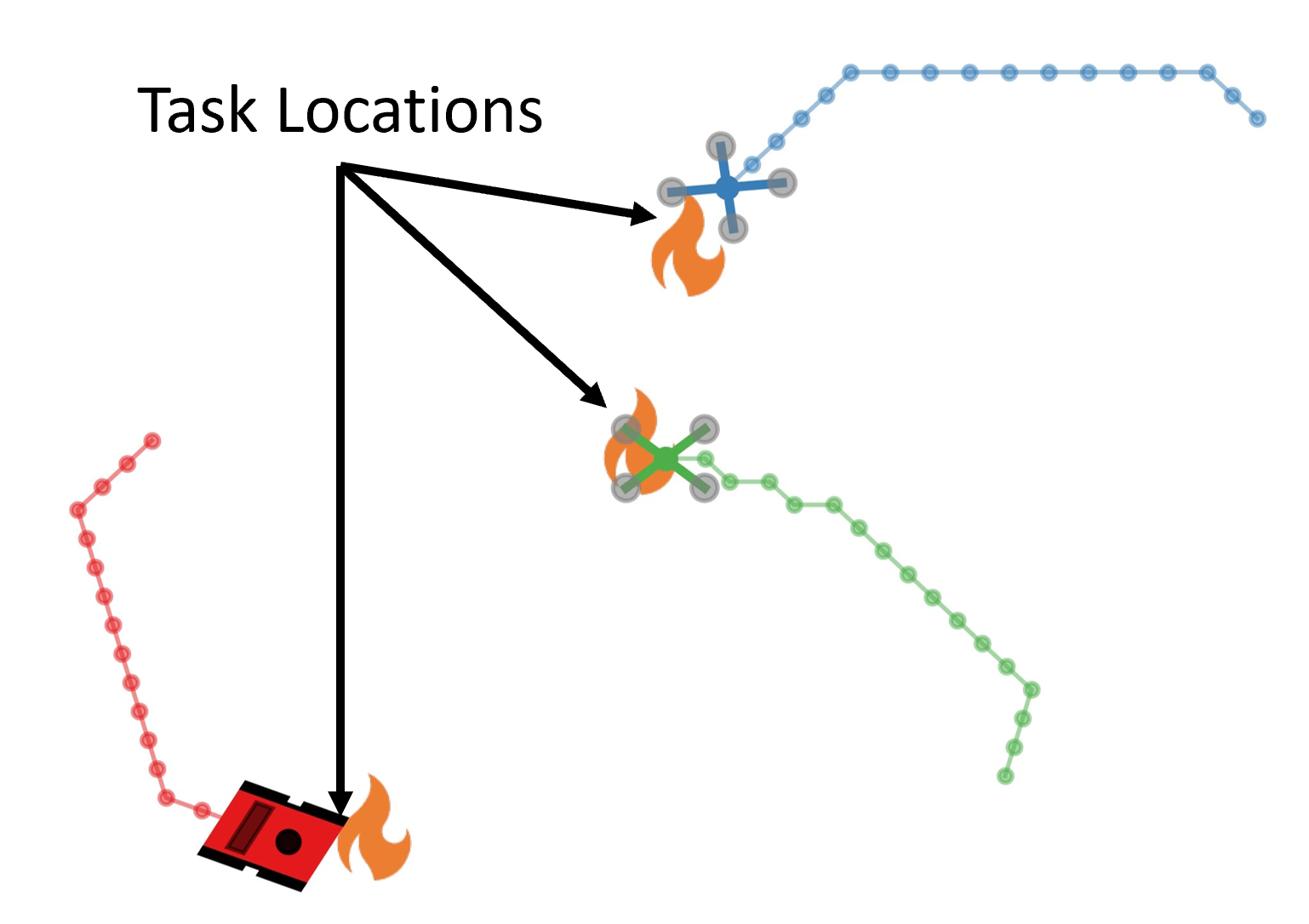}}
    \label{fig:diverge_useEP}
    }
    \caption{Illustration of our simulations showing that first-order reasoning fails to allow the blue and green UAVs to convey their intentions or reason from the other robot's perspective, resulting in both robots converging on the same task. Higher-order reasoning allows the UAVs to interpret and signal clear intentions, successfully completing different tasks.}
    \label{fig:allocation_sample_comparison}
\end{figure}

As illustrated in Fig.~\ref{fig:diverge_noEP}, the blue and green UAVs are incapable of resolving their belief discrepancies. In contrast, Fig.~\ref{fig:diverge_useEP} demonstrates that through higher-order reasoning, the UAVs can convey and understand intentions, allowing them to resolve task assignment conflicts without communication. The results of all trials are depicted in Fig.~\ref{fig:diverge_comparison_graph} which show that higher-order reasoning improves the success rate and that an increase in complexity does not result in a significant decrease in success similar to the results from Fig.~\ref{fig:converge_comparison_graph}.

\begin{figure}[h]
    \centering
    \fbox{\includegraphics[width=0.45\textwidth]{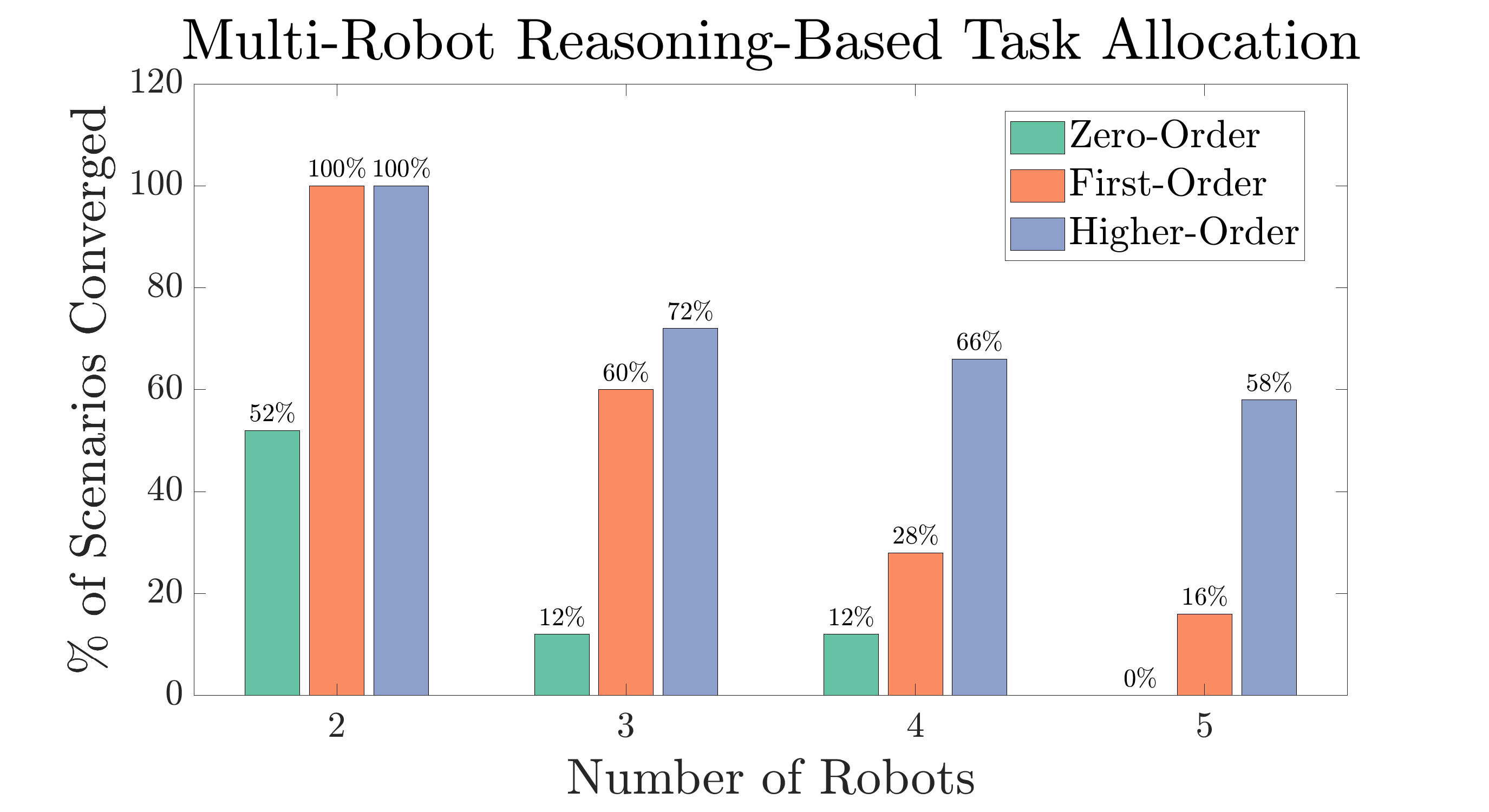}}
    \caption{Comparison of using zero- and first-order versus higher-order reasoning for a task allocation mission.}
    \label{fig:diverge_comparison_graph}
\end{figure}

\subsection{Multi-Robot Multi-Task Allocation}
Lastly, we show our method's performance for a multi-robot multi-task assignment problem where robots must decide without explicit communication or a centralized algorithm to accomplish tasks in the environment. Target locations, robot sensor configurations, initial robot positions, and task locations are randomly generated for 30 trials per number of robots ranging between 2 and 8 robots, as well as number of tasks ranging between 20 and 45 tasks. In total, the data for each level of reasoning and the number of robots is aggregated for 150 trials per category. In Fig.~\ref{fig:iterative_comparison_graph}, we show that utilizing higher-order reasoning allows robots to decrease redundancy when completing tasks in the environment and accomplish all tasks more quickly than just zero- or first-order reasoning. We observe that while the performance difference is minimal for teams of two robots, significant improvements are evident as the team size increases, especially for groups with 4 or more robots. The following example in Fig.~\ref{fig:multi_allocation_sample_comparison} illustrates that employing lower-order reasoning prevents robots from understanding the intentions of others, leading to duplicated tasks and extending the time required to accomplish tasks in the environment. In contrast, higher-order reasoning enables robots to communicate their intentions, thereby reducing redundancy and increasing system efficiency.
\begin{figure}[h]
    \centering
    \fbox{\includegraphics[width=0.45\textwidth]{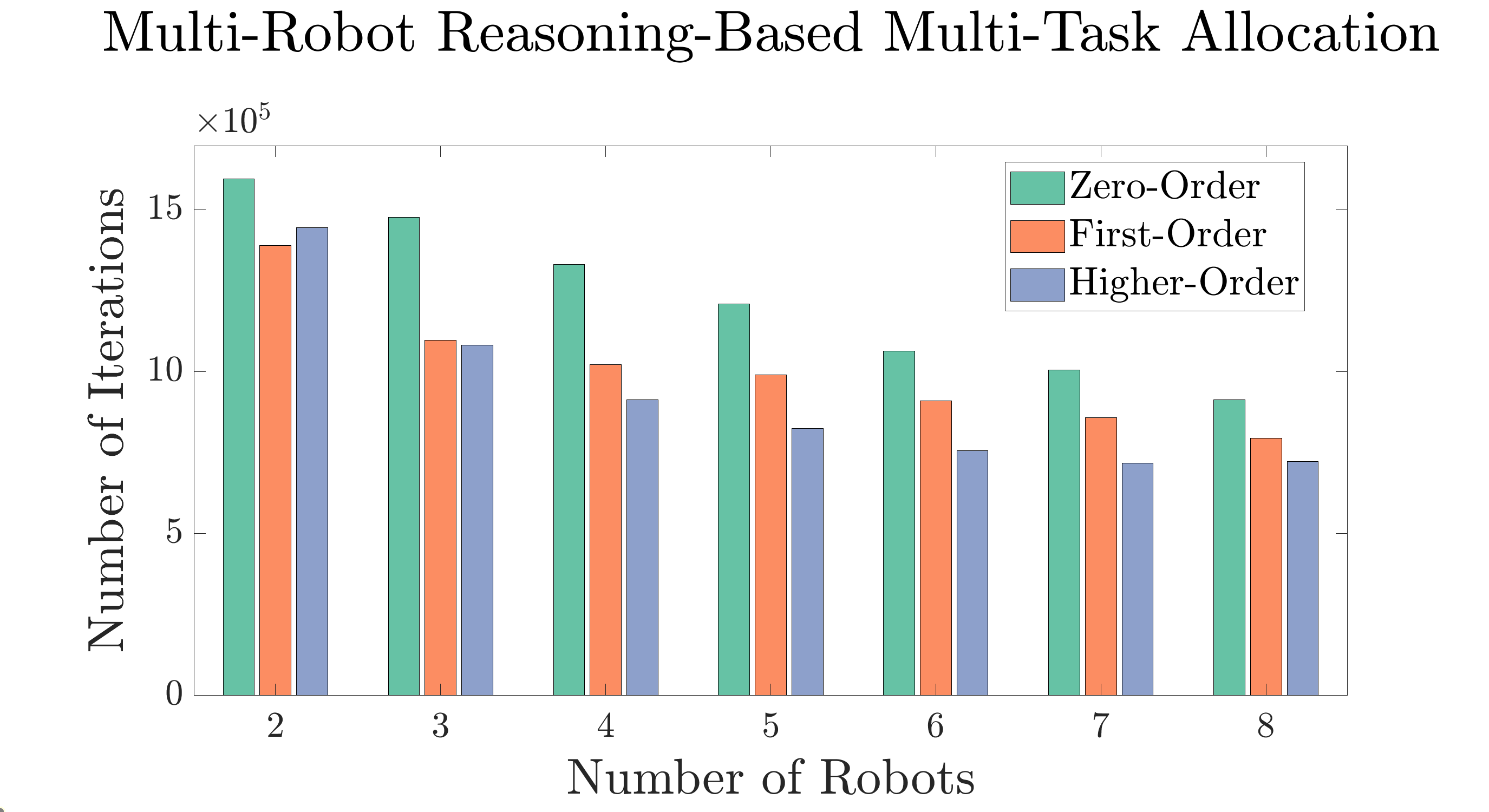}}
    \caption{Comparison of using zero- and first-order versus higher-order reasoning for a task allocation mission.}
    \label{fig:iterative_comparison_graph}
\end{figure}

\begin{figure}[ht]
    \subfigure[First-order reasoning]{
    \fbox{\includegraphics[width=0.223\textwidth]
    {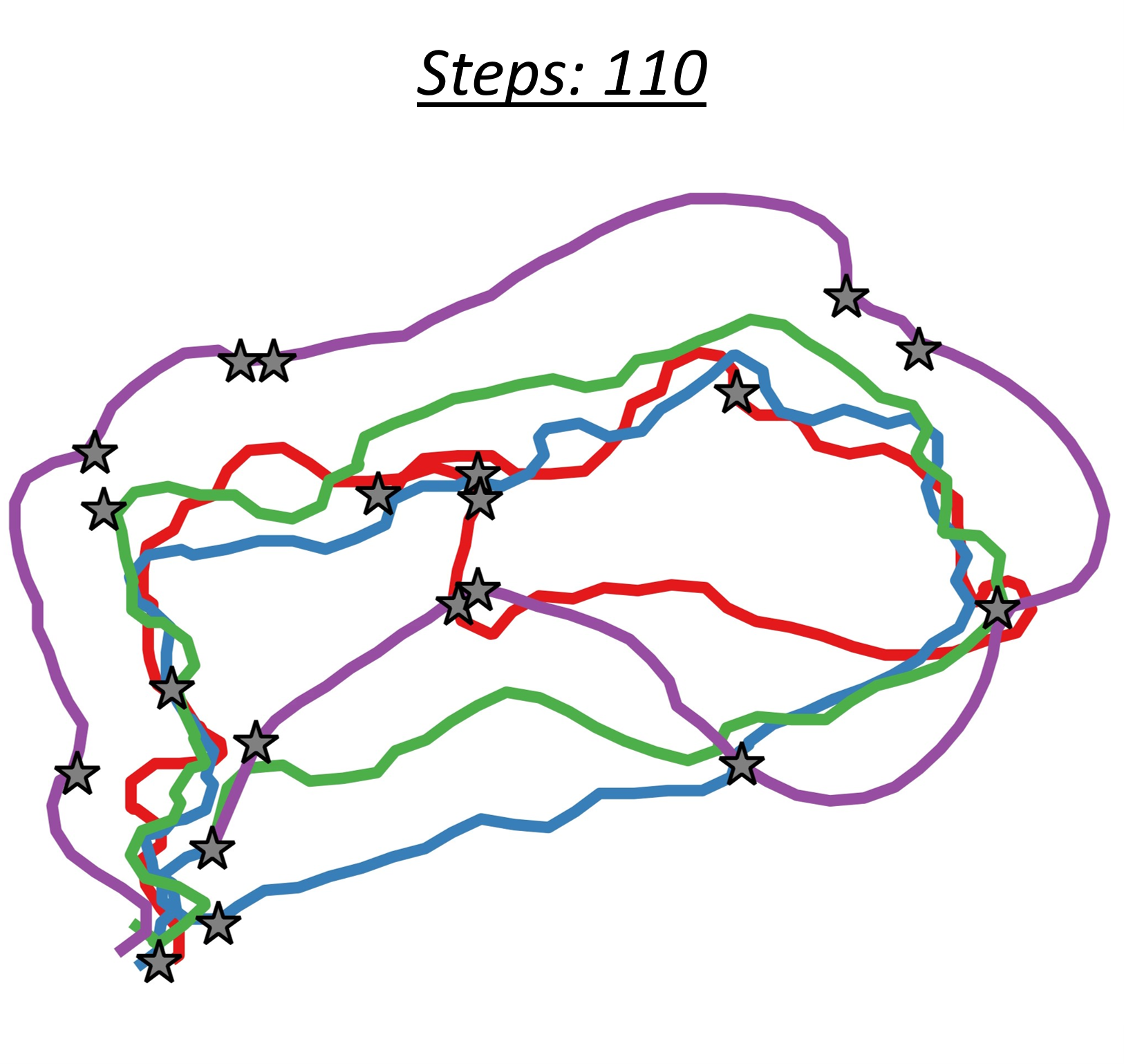}}
    \label{fig:lower_order_iterative}
    }%
    \subfigure[Higher-order reasoning]{\hspace{-0.4em}
    \fbox{\includegraphics[width=0.223\textwidth]{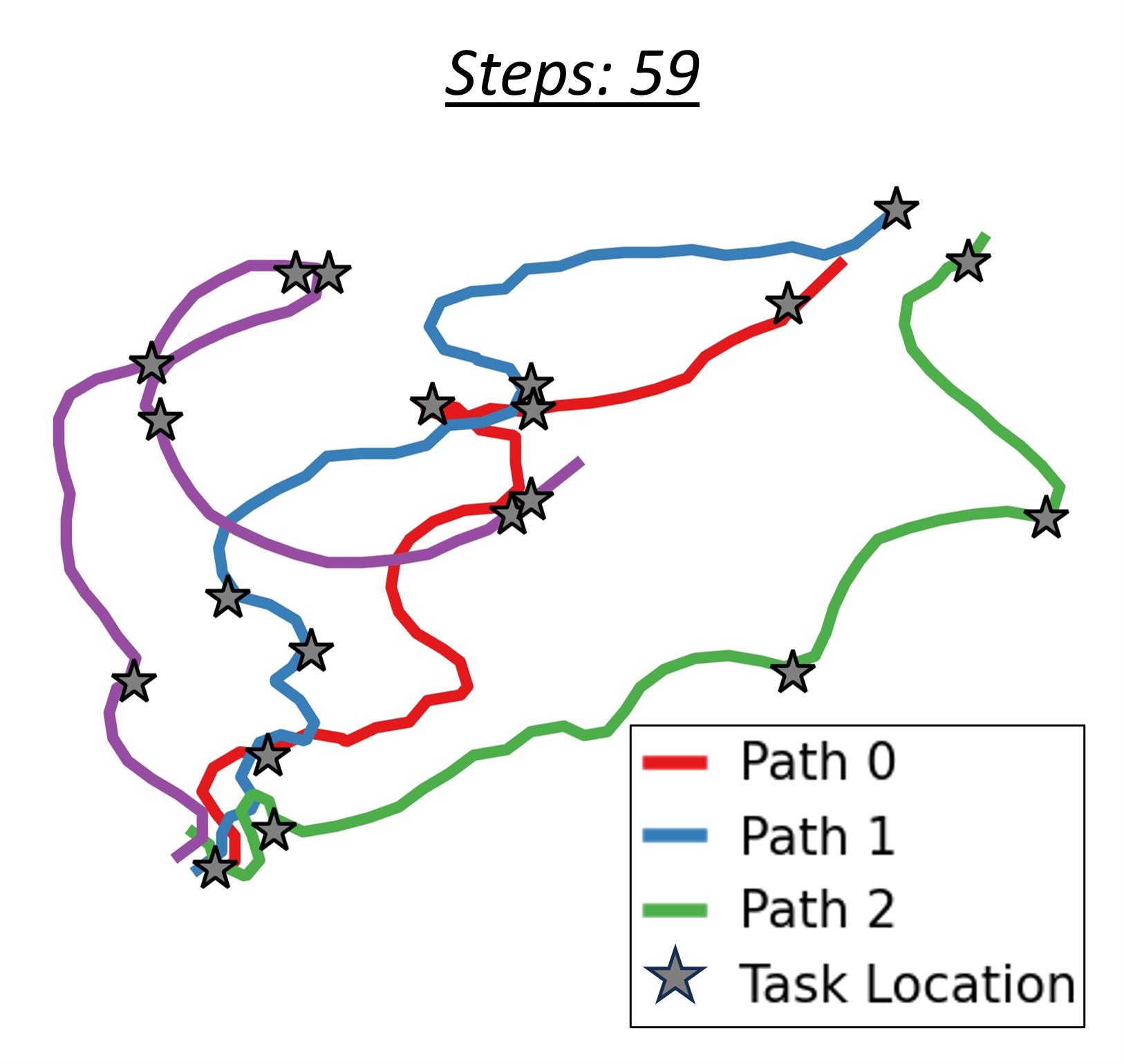}}
    \label{fig:higher_order_iterative}
    }
    \caption{Illustration of our simulations for multi-robot multi-task scenarios. In (a), the robots are unable to discern each others' intentions and causes redundant task completion. In (b), the robots are able to identify distinct tasks to accomplish and decipher other robots' intentions.}
    \label{fig:multi_allocation_sample_comparison}
\end{figure}

\section{Experiments}\label{sec:experiments}
Our approach was also validated through several laboratory experiments with a multi-robot team. The team consists of several Husarion ROSbot 2.0s that used a Vicon motion capture system for localization. Vehicles start at various positions in the environment. The experiments were carried out in a $4$m$ \times 5.5$m space. The results of a sample experiment with two potential equidistant rendezvous locations and three ground vehicles are shown in Fig.~\ref{fig:experiment_rz}.
\begin{figure}[h]
    \centering
    \includegraphics[width=0.45\textwidth]{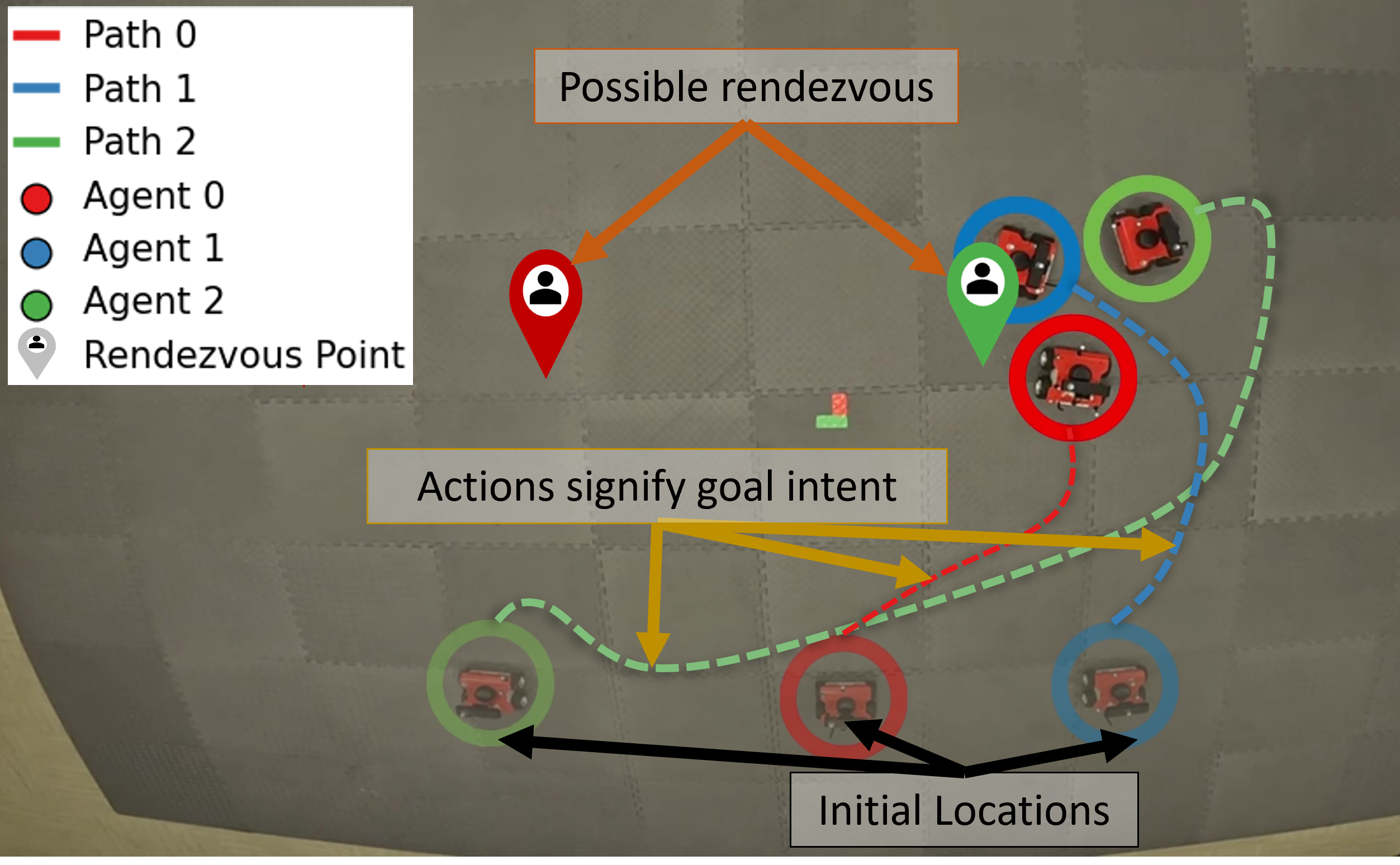}
    \caption{Experiment where three robots rendezvous at one of two equidistant locations without explicit communication.}
    \label{fig:experiment_rz}
\end{figure}

As shown in the figure, each robot initially is uncertain about which goal the system should converge to. After a small number of measurements, the robots makes their intentions explicit by minimizing free energy and maximizing evidence that they are moving toward the green rendezvous point. The robots accomplish this by taking exaggerated paths toward the green rendezvous point. This is also the case depicted in Fig.~\ref{fig:experiment_task} where robots move toward distinct goals and signal their intentions using exaggerated paths.

\begin{figure}[h]
    \centering
    \includegraphics[width=0.45\textwidth]{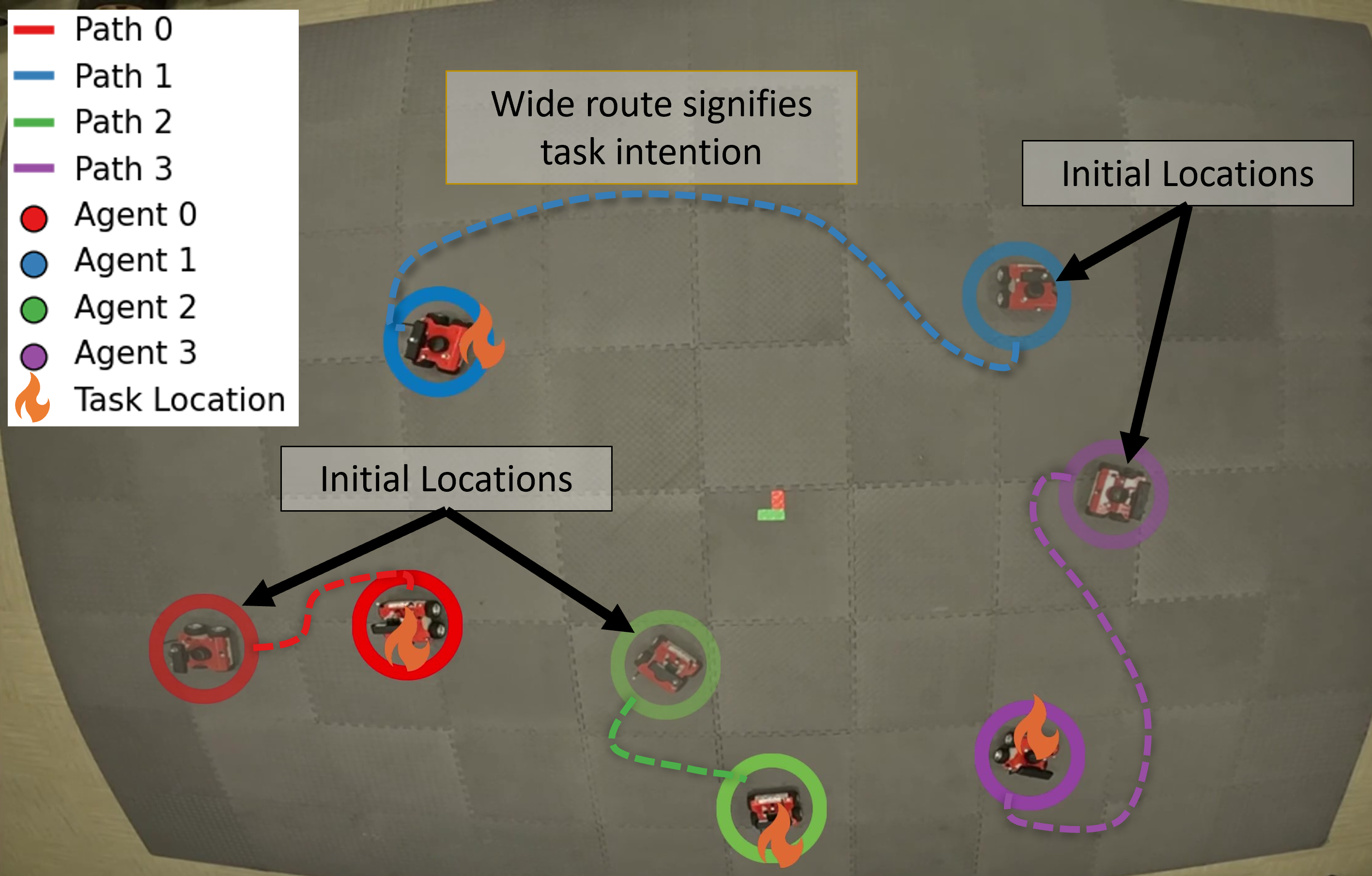}
    \caption{A four robot experiment where each robot needs to accomplish a distinct task without explicit communication.}
    \label{fig:experiment_task}
\end{figure}

We show similarly that we can perform multiple tasks per robot with heterogeneous sensing capabilities. Fig.~\ref{fig:iterative_sample_comparison} shows several snapshots of the results of this sample virtual experiment using the RotorS Firefly and Clearpath Jackal models in Gazebo and RViz where an aerial vehicle can only observe the angles of other robots from tasks, while the ground robots can observe the depth. Similarly to the simulations, we assume that the ground robots can abstract the angle measurements of the aerial vehicle. Fig.~\ref{fig:iterative_1} shows the starting location of all the robots and the tasks. Fig.~\ref{fig:iterative_2} shows that the robots initially move to signify which tasks they are going to complete while Fig.~\ref{fig:iterative_3} shows the majority of tasks accomplished without explicit communication. In Fig.~\ref{fig:iterative_4} all tasks have been accomplished and the robots return to their initial location. In this way, the robots cooperatively complete all tasks in the environment, accounting for the heterogeneity of the robots in the system and without explicit communication.
\begin{figure}[h]
    \centering
    \subfigure[]{
    \includegraphics[width=0.225\textwidth]
    {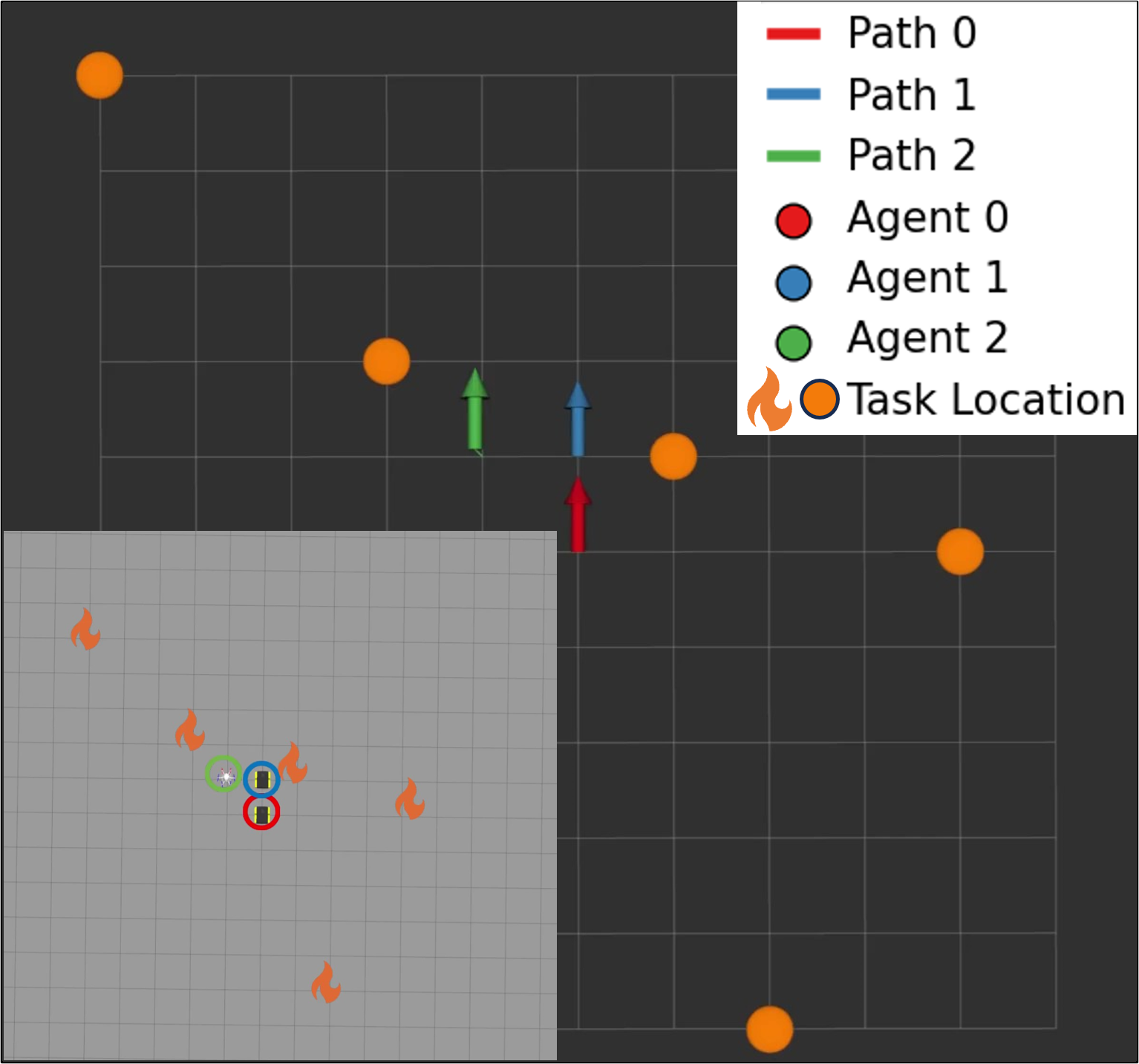}
    \label{fig:iterative_1}
    }%
    \subfigure[]{
    \includegraphics[width=0.225\textwidth]{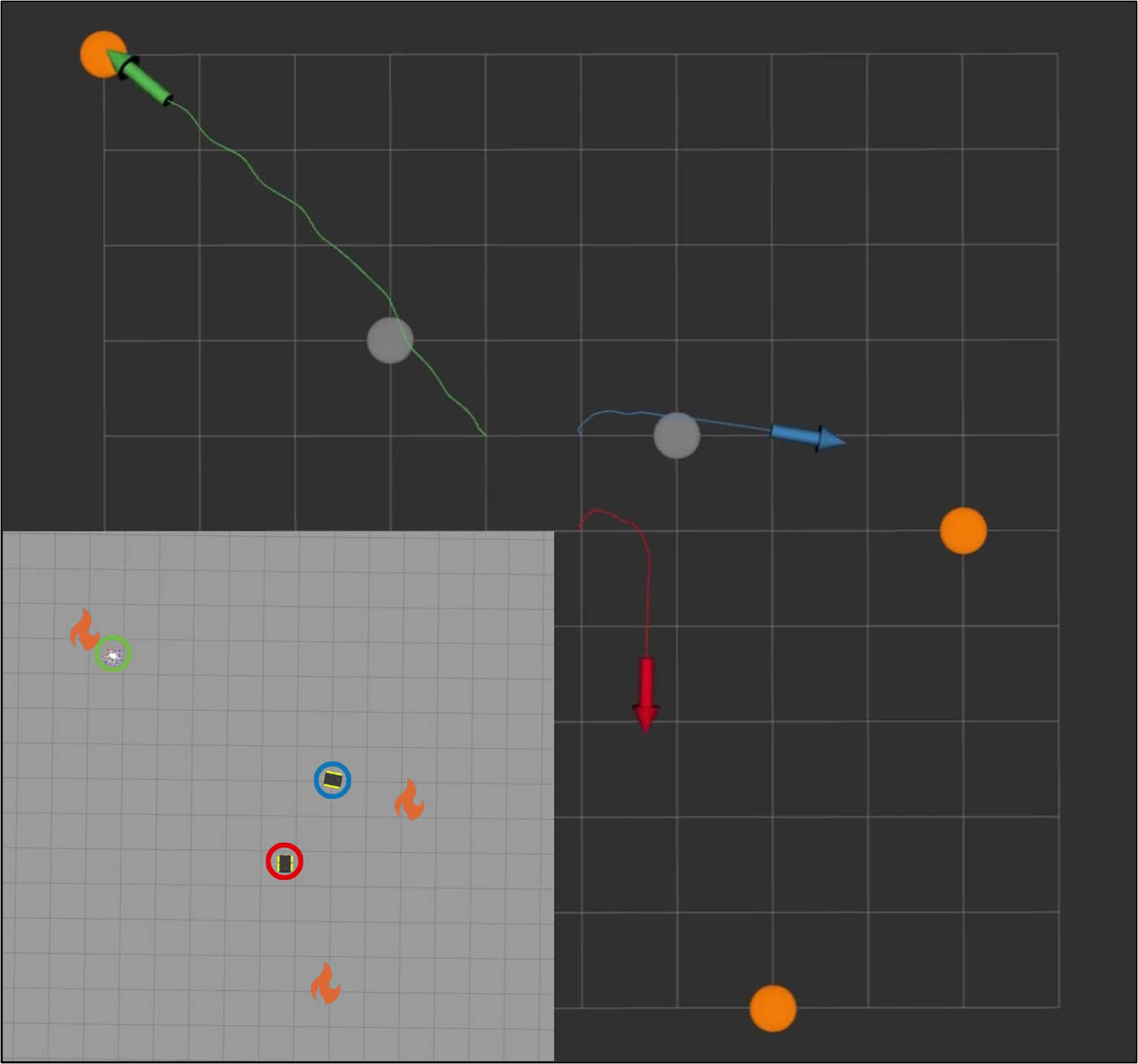}
    \label{fig:iterative_2}
    }\\
    \subfigure[]{
    \includegraphics[width=0.225\textwidth]{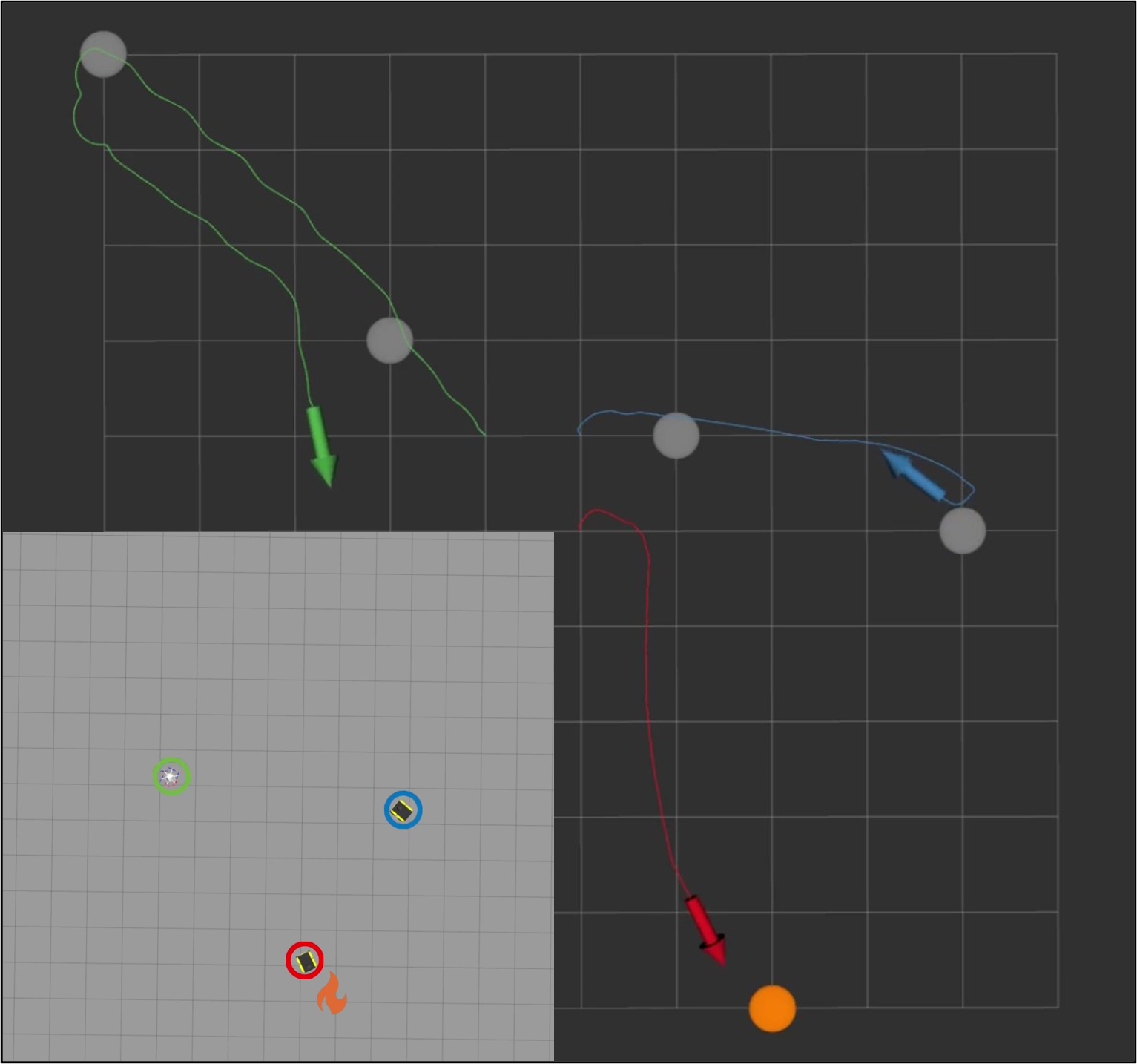}
    \label{fig:iterative_3}
    }%
    \subfigure[]{
    \includegraphics[width=0.225\textwidth]{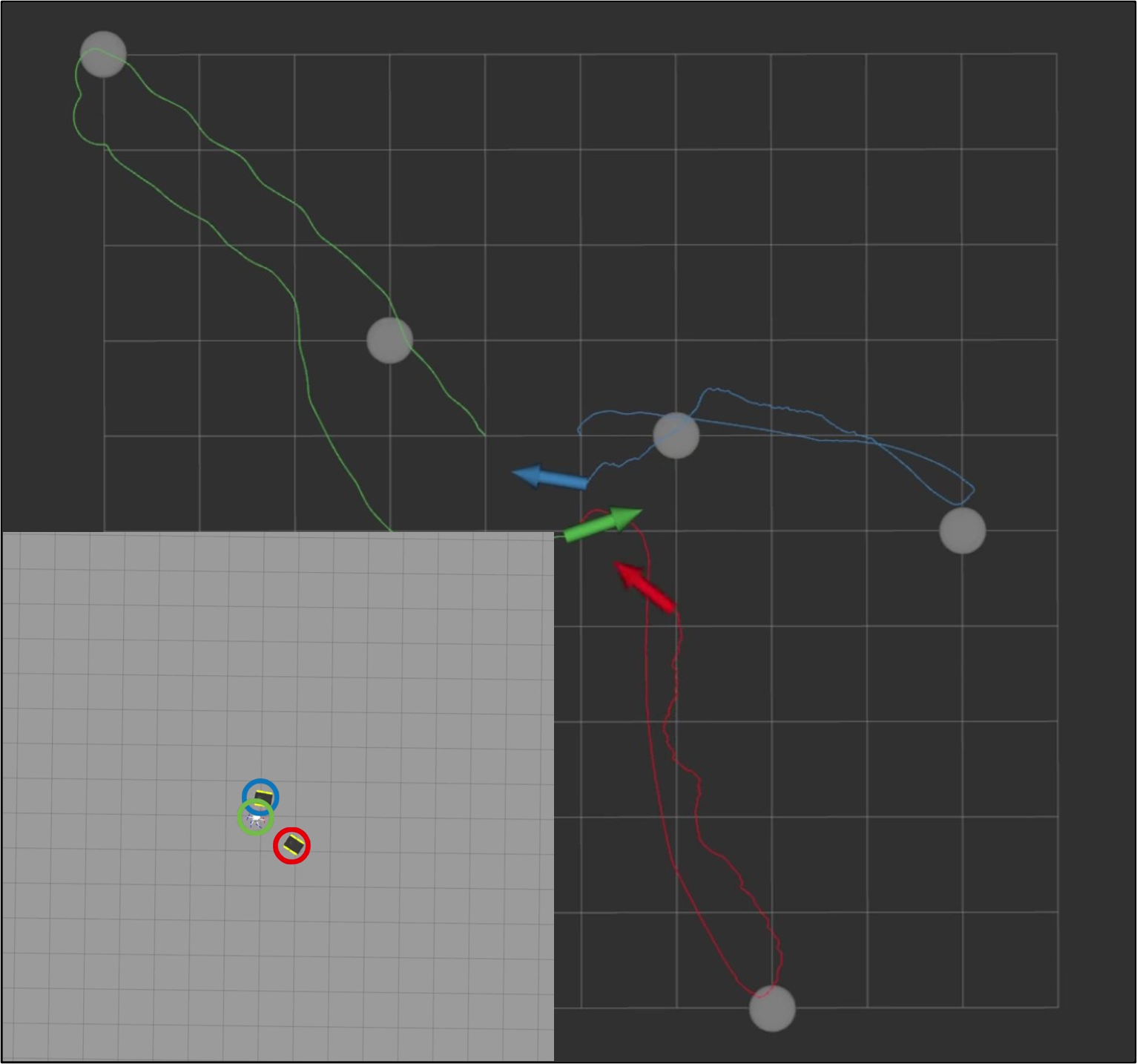}
    \label{fig:iterative_4}
    }
    \caption{RViz and Gazebo snapshots of virtual experiment with heterogeneous vehicles. As shown, the robots use higher-order reasoning to accomplish multiple tasks, even with different sensor configurations.}
    \label{fig:iterative_sample_comparison}
\end{figure}

\section{Conclusion}\label{sec:conclusion}
In this work, we demonstrated the effectiveness of utilizing higher-order reasoning for multi-robot systems (MRS) operating under communication constraints. By integrating theory of mind (ToM) and epistemic planning, our proposed framework allows robots to infer the knowledge and intentions of others based on their observations and last known states. This approach enables robots to cooperate and achieve common goals even when explicit communication is not possible.

Our findings show that higher-order reasoning, extending up to the third level, significantly enhances the ability of MRS to converge to correct belief states and complete tasks efficiently. The hierarchical epistemic planning combined with active inference for runtime plan adaptation provides a robust solution to mitigate the challenges of limited communication in heterogeneous robot teams. Future work will focus on optimizing epistemic planning techniques and exploring the integration of even higher levels of reasoning. Additionally, we aim to extend our framework to more complex scenarios and larger robot teams, such as sensors with limited field-of-views and cluttered environments, further enhancing the decision-making capabilities of multi-agent systems.

\section{Acknowledgements}
This work is based on research sponsored by Northrop Grumman through the University Basic Research Program.

\bibliographystyle{IEEEtran}
\bibliography{sample.bib}

\end{document}